\newtheorem{proposition}{Proposition}
\newtheorem{lemma}{Lemma}
\newtheorem{definition}{Definition}
\newcommand{\PkW}[0]{\mathbb{P}_{\mathcal{H}_{\mathcal{E}}}}
\newcommand{\WaW}[0]{\bm{W}^\ast\circ\bm{W}}
\newcommand{\WWa}[0]{\bm{W}\circ\bm{W}^\ast}
\newcommand{\Id}[1]{\bm{I}_{\mathcal{#1}}}
\newcommand{\Pm}[0]{\bm{\Phi}_{c}}
\newcommand{\Pma}[0]{\bm{\Phi}_{c}^\ast}
\newcommand{\PPa}[0]{\bm{\Phi}\circ\bm{\Phi}^\ast}
\newcommand{\PPam}[0]{\bm{\Phi}_{c} \circ \bm{\Phi}_{c}^\ast}
\newcommand{\PaPm}[0]{\bm{\Phi}_{c}^\ast \circ \bm{\Phi}_{c}}
\theoremstyle{plain}
\theoremstyle{definition}
\theoremstyle{remark}
\icmltitlerunning{Duality for Probabilistic PCA}
\begin{document}

\twocolumn[
\icmltitle{A Dual Formulation for Probabilistic Principal Component Analysis}



\icmlsetsymbol{equal}{*}

\begin{icmlauthorlist}
\icmlauthor{Henri De Plaen}{stadius}
\icmlauthor{Johan A. K. Suykens}{stadius}
\end{icmlauthorlist}

\icmlaffiliation{stadius}{KU Leuven, Department of Electrical Engineering (ESAT), STADIUS Center for Dynamical Systems, Signal Processing and Data Analytics, Kasteelpark Arenberg 10, 3001 Leuven, Belgium}

\icmlcorrespondingauthor{Henri De Plaen}{henri.deplaen@esat.kuleuven.be}

\icmlkeywords{Machine Learning, Optimal Transport, Wasserstein, PCA, Principal Component Analysis, Kernel Methods, Restricted Kernel Machines}

\vskip 0.3in
]



\printAffiliationsAndNotice{}  

\setcounter{footnote}{1}
\begin{abstract}
In this paper, we characterize \emph{Probabilistic Principal Component Analysis} in Hilbert spaces and demonstrate how the optimal solution admits a representation in dual space. This allows us to develop a generative framework for kernel methods. Furthermore, we show how it englobes \emph{Kernel Principal Component Analysis} and illustrate its working on a toy and a real dataset.
\end{abstract}
\section{Introduction}
Classical datasets often consist of many features, making dimensionality reduction methods particularly appealing. \emph{Principal Component Analysis} (PCA) is one of the most straightforward frameworks to that goal and it is hard to find a domain in machine learning or statistics where it has not proven to be useful. PCA considers new decorrelated features by computing the eigendecomposition of the covariance matrix.

Probabilistic models on another side participate to the building of a stronger foundation for machine learning models. By considering models as probability distributions, we are able to natively access notions such as variance or sampling, \emph{i.e.} generation.
A probabilistic approach to PCA, known as \emph{Probabilistic Principal Component Analysis} (Prob. PCA), has been formulated by~\cite{ppca}. Its principles can be visualized in the primal part of Table~\ref{tab:interpretation}.  

Even endowed with a probabilistic interpretation, PCA remains restricted to linear relations between the different features. \emph{Kernel Principal Component Analysis} (KPCA)~\cite{mika1998kernel,scholkopf1998nonlinear} was an attempt to give a non-linear extension to (non-probabilistic) PCA by decomposing a kernel matrix instead of the covariance matrix. 
An earlier attempt to give a probabilistic formulation of KPCA has been done by~\cite{pkpca}. As developed further, the latter model does not consist in a kernel equivalent of the Prob. PCA, but rather in another model based on similar principles.

More recently, \emph{Restricted Kernel Machines}~\cite{rkm} opened a new door for a probabilistic version of PCA both in primal and dual. They essentially use the Fenchel-Young inequality on a variational formulation of KPCA~\cite{lssvm-pca,convex_pca} to obtain an energy function, closely resembling to \emph{Restricted Boltzmann Machines}. The framework has been further extended to generation~\cite{schreurs,winant2020latent}, incorporating robustness~\cite{pandey2020robust}, multi-view models~\cite{pandey2021generative}, deep explicit feature maps~\cite{pandey2022disentangled} or times-series~\cite{pandey2022recurrent}. 

\subsection{Contributions}
\begin{enumerate}
\itemsep0em
    \item We characterize the Prob. PCA framework in Hilbert spaces and give a dual interpretation to the model.
    \item We develop a new extension of KPCA incorporating a noise assumption on the explicit feature map.
    \item We give a probabilistic interpretation of the generation in KPCA.
    \item We illustrate how the dual model works on a toy and a real dataset and show its connections to KPCA\footnote{Resources: \url{https://hdeplaen.github.io/kppca}.}.
\end{enumerate}
\vspace{-.7em}
\begin{figure*}[ht!]
    \centering
    \def\svgwidth{\textwidth}
    \import{img}{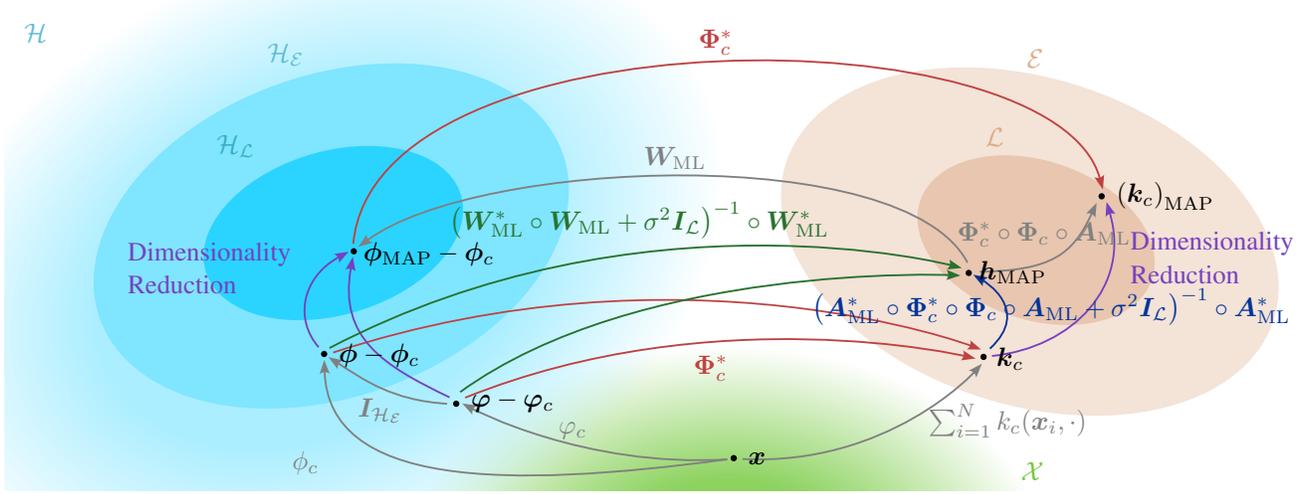}
    \caption{Global overview of the Probabilistic Principal Component Analysis in both primal and dual formulations. The primal spaces, or feature $\mathcal{H}$, $\mathcal{H}_{\mathcal{E}}$ and $\mathcal{H}_{\mathcal{L}}$ are in blue. The dual, or kernel and latent spaces $\mathcal{E}$ and $\mathcal{L}$ are in brown. The input space $\mathcal{X}$ is in green. The color or the applications (arrows) is just for the readability and has nothing to do with the color of the spaces.}
    \label{fig:overview}
\end{figure*}

\begin{table*}[ht]
    \centering
    \resizebox{\textwidth}{!}{
    \begin{tabular}{llll}
        \textbf{Distribution} & \textbf{Interpretation} & \textbf{Primal (features)} & \textbf{Dual (kernels)}\\
        latent $|$ observation %
        & latent projection %
        & $\bm{h}|\bm{\phi} \sim \mathcal{N}\bigl(\bm{\Sigma}_{\bm{h}|\bm{\phi}}^{-1}\circ\bm{W}_{\mathrm{ML}}^\ast(\bm{\phi}-\bm{\phi}_c),\sigma^2\bm{\Sigma}_{\bm{h}|\bm{\phi}}^{-1}\bigr)$ %
        & $\bm{h}|\bm{k}_c \sim \mathcal{N}\bigl(
         \bm{\Sigma}_{\bm{h}|\bm{k}_c}^{-1}\circ \bm{A}_{\mathrm{ML}}\bm{k}_c,\bm{\Sigma}^{-1}_{\bm{h}|\bm{k}_c}\bigr)$ %
         \\
        observation $|$ latent %
        & latent-based generation %
        & $\bm{\phi} | \bm{h} \sim \mathcal{N}\bigl(\bm{W}_{\mathrm{ML}}\bm{h} - \bm{\phi}_c,\sigma^2\bm{I}_{\mathcal{H}_{\mathcal{E}}}\bigr)$%
        & $\bm{k}_c|\bm{h} \sim \mathcal{N}\bigl(( \PaPm)\circ \bm{A}_{\mathrm{ML}}\bm{h},\sigma^2\PaPm\bigr)$%
        \\
        latent %
        & latent prior %
        & $\bm{h} \sim \mathcal{N}(\bm{0},\bm{I}_{\mathcal{L}})$ %
        & $\bm{h} \sim \mathcal{N}(\bm{0},\bm{I}_{\mathcal{L}})$%
        \\
        observation %
        & absolute generation %
        & $\bm{\phi} \sim \mathcal{N}(\bm{\mu},\bm{W}_{\mathrm{ML}}\circ\bm{W}_{\mathrm{ML}}^\ast + \sigma^2\bm{I}_{\mathcal{H}_{\mathcal{E}}})$ %
        & $\bm{k}_c \sim \mathcal{N}\bigl(\bm{0},\bm{A}_{\mathrm{ML}}^\ast \circ \bm{A}_{\mathrm{ML}} +\sigma^2\left(\Pma\circ\Pm\right)^{-1}\bigr)$
    \end{tabular}}
    \caption{Interpretation of the different distributions of the Prob. PCA framework after training, in both primal and dual formulations. The covariance operators are given by $\bm{\Sigma}_{\bm{h}|\bm{\phi}} = (\bm{W}_{\mathrm{ML}}^\ast \circ \bm{W}_{\mathrm{ML}} + \sigma^2\Id{\mathcal{L}})^{-1}$ and $\bm{\Sigma}_{\bm{h}|\bm{k}_c} = \bigl(\bm{A}_{\mathrm{ML}}^\ast \circ (\PaPm) \circ \bm{A}_{\mathrm{ML}} + \sigma^2 \bm{I}_{\mathcal{L}}\bigr)^{-1}$, with maximum likelihood estimators for the primal and dual interconnection operators $\bm{W}_{\mathrm{ML}}$ and $\bm{A}_{\mathrm{ML}}$.}
    \label{tab:interpretation}
\end{table*}
\section{Primal and Dual Spaces}

The key idea behind the duality in PCA is that outer and inner products share the same eigenvalues. The consequence is that instead of decomposing the covariance matrix of any given feature map, we can decompose the associated Gram matrix, \emph{i.e.} the kernel matrix. The former is considered as the \emph{primal} formulation and the latter as the \emph{dual} formulation and they are both equivalent. Extending Prob. PCA to a dual formulation is however not straightforward: if all feature maps have an associated kernel, the converse is trickier. Some kernels correspond to feature maps in infinite dimensional spaces, where probability distributions cannot be properly defined. We therefore need to choose well defined finite subspaces to work in and consider linear operators instead of matrices. All formal definitions, propositions and proofs are provided in Appendix~\ref{app:theory}.

\subsection{Primal Spaces}
\textbf{Feature Space $\mathcal{H}$.} Given an input space $\mathcal{X}$, we first consider any feature map $\varphi : \mathcal{X} \rightarrow \mathcal{H}$. Following~\cite{convex_pca}, we will consider a separable, possibly infinite dimensional, Hilbert space $\left(\mathcal{H}, \langle\cdot,\cdot\rangle_{\mathcal{H}}\right)$. By $\bm{\varphi}$, we denote an element of $\mathcal{H}$ and its adjoint by $\bm{\varphi}^\ast = \langle \bm{\varphi}, \cdot \rangle \in \mathcal{H}^\ast$, with $\mathcal{H}^\ast \sim \mathcal{H}$ its Fréchet-Riesz dual space. Essentially, it corresponds to the transpose $\bm{\varphi}^\top$ in real, finite dimensional spaces as $\bm{\varphi}_1^\top\bm{\varphi}_2 = \langle\bm{\varphi}_1,\bm{\varphi}_2\rangle_{\mathcal{H}}$, but generalizes it for the possibly infinite dimensional spaces that will be necessary for the introduction of kernels. Furthermore, we assume our space to be defined over the reals such that $\langle\cdot,\cdot\rangle_{\mathcal{H}} : \mathcal{H} \times \mathcal{H} \rightarrow \mathbb{R}$ and its inner product is symmetric $\langle\bm{\varphi}_1,\bm{\varphi}_2\rangle_{\mathcal{H}} = \langle\bm{\varphi}_2,\bm{\varphi}_1\rangle_{\mathcal{H}}$. If $\mathcal{H}$ is of finite dimension $d$, we can therefore identify its canonical basis $\bm{u}_1,\ldots,\bm{u}_d$ with the canonical basis of $\mathbb{R}^d$. 

\textbf{Finite Feature Space $\mathcal{H}_{\mathcal{E}}$.} Considering a set of $N$ observations $\left\{\bm{x}_i \in \mathcal{X}\right\}_{i=1}^N$, the idea is to work directly in $\mathcal{H}$ by considering instead the feature map of the datapoints $\bm{\varphi}_i = \varphi\left(\bm{x}_i\right)$. We can however not define a normal distribution onto the full $\mathcal{H}$ yet as it is possibly infinite dimensional. We therefore have to consider a finite subspace $\mathcal{H}_{\mathcal{E}} \subset \mathcal{H}$. 
A natural choice would be $\mathcal{H}_{\mathcal{E}} = \mathrm{span}\left\{\bm{\varphi}_1,\ldots,\bm{\varphi}_N\right\}$. We now first have to find an orthonormal basis for $\mathcal{H}_{\mathcal{E}}$.

\subsection{Dual Spaces}
\textbf{Kernels.} For each feature map, there is an induced positive semi-definite kernel $k : \mathcal{X} \times \mathcal{X} \rightarrow \mathbb{R} : k\left(\bm{x},\bm{y}\right) = \langle \varphi(\bm{x}),\varphi(\bm{y})\rangle_{\mathcal{H}} = \varphi(\bm{x})^\ast\varphi(\bm{y})$. Inversely, to each positive semi-definite kernel corresponds a, possibly infinite dimensional, feature map, even if not explicitly defined. This follows from the theory of \emph{Reproducing Kernel Hilbert Spaces}. We refer to~\cite{kernels} for further info. 

\textbf{Kernel Space $\mathcal{E}$. }We now consider a finite dimensional Hilbert space $\left(\mathcal{E},\langle\cdot,\cdot\rangle_{\mathcal{E}}\right)$ of dimension $N$, the number of observations. 
It is defined similarly as above, with orthonormal basis $\bm{e}_1,\ldots,\bm{e}_N$. 
The basis also defines the identity over $\mathcal{E}$ as $\bm{I}_{\mathcal{E}} = \sum_{i=1}^N \bm{e}_i\bm{e}_i^\ast$. The goal for $\mathcal{E}$ is to represent the space of the kernel representations. We therefore define the linear operator $\bm{\Phi} : \mathcal{E} \rightarrow \mathcal{H} : \sum_{i=1} \bm{\varphi}_i\bm{e}_i^\ast$ and its adjoint $\bm{\Phi}^\ast : \mathcal{H} \rightarrow \mathcal{E} : \sum_{i=1}^N \bm{e}_i \bm{\varphi}_i^\ast$. Essentially, $\bm{\Phi}^\ast$ returns the kernel value with each datapoint: $\bm{\Phi}^\ast\varphi(\bm{x}) = \sum_{i=1}^N \bm{e}_i \left( \bm{\varphi}_i^\ast \varphi(\bm{x})\right) = \sum_{i=1}^N \bm{e}_i k\left( \bm{x}_i, \bm{x}\right)$ for any $\bm{x} \in \mathcal{X}$. Similarly, $\bm{\Phi}$ projects this value back as a linear combination of the different $\bm{\varphi}_i$'s, thus mapping back to $\mathcal{H}_{\mathcal{E}} \subset \mathcal{H}$. For this reason, the covariance $\bm{\Phi} \circ \bm{\Phi}^\ast = \sum_{i=1}^N \bm{\varphi}_i\bm{\varphi}_i^\ast$ acts as a projector from $\mathcal{H} \rightarrow \mathcal{H}_{\mathcal{E}}$. 
Its eigenvectors therefore form an orthonormal basis of the finite feature space $\mathcal{H}_{\mathcal{E}}$, which acts as the primal equivalent of the kernel space $\mathcal{E}$.

\textbf{Centered Kernels.} In most applications however, we prefer to work with the centered feature map, which we define as $\varphi_c(\cdot) = \varphi(\cdot) - \bm{\varphi}_c$ with $\bm{\varphi}_c = \frac1N \sum_{i=1}^N \bm{\varphi}_i$. We denote the associated kernel associated centered kernel $k_c : \mathcal{X} \times \mathcal{X} \rightarrow \mathbb{R} : k_c(\bm{x}_1,\bm{x}_2) = \varphi_c(\bm{x}_1)^\ast\varphi_c(\bm{x}_2)$. This leads to the definition of a new centered operator $\bm{\Phi}_c = \sum_{i=1} (\bm{\varphi}_i-\bm{\varphi}_c)\bm{e}_i^\ast = \bm{\Phi}\left(\bm{I}_{\mathcal{E}} - \frac1N \bm{1}_{\mathcal{E}\times\mathcal{E}}\right)$, with $\bm{1}_{\mathcal{E}\times\mathcal{E}} = \sum_{i,j=1}^N \bm{e}_i\bm{e}_{j}^\ast$. As always, we also consider its adjoint $\bm{\Phi}_{c}^\ast$. Considering the dual operator, we have $\PaPm = \sum_{i=1}^N (\bm{\varphi}_i-\bm{\varphi}_c)^\ast(\bm{\varphi}_i-\bm{\varphi}_c)\bm{e}_i\bm{e}_j^\ast = \sum_{i=1}^N k_c(\bm{x}_i,\bm{x}_j)\bm{e}_i\bm{e}_j^\ast$. We notice now that $\mathcal{H}_{\mathcal{E}} = \mathrm{span}\{\bm{\varphi}_1,\ldots,\bm{\varphi}_N\} = \mathrm{span}\{\bm{\varphi}_1-\bm{\varphi}_c,\ldots,\bm{\varphi}_N-\bm{\varphi}_c\}$ because $\bm{\varphi}_c$ is a linear combination of the elements of the basis. Therefore, the primal operator $\bm{\Phi}_c \circ \bm{\Phi}_c^\ast = \sum_{i=1}^N (\bm{\varphi}_i-\bm{\varphi}_c)(\bm{\varphi}_i-\bm{\varphi}_c)^\ast$ also acts as a projector from $\mathcal{H} \rightarrow \mathcal{H}_{\mathcal{E}}$ and we can choose its eigenvectors instead as an orthonormal basis of $\mathcal{H}_{\mathcal{E}}$. 

\textbf{Covariance and Kernels}. We now consider the key idea behind the duality in PCA: the operators $\PPam$ and $\PaPm$ are self-adjoint, positive semi-definite and share the same non-zero eigenvalues. We have $\bm{\Phi}_c \circ \bm{\Phi}_c^\ast = \sum_{i=1}^N \lambda_i\bm{v}_i\bm{v}_i^\ast$ and $\mathcal{H}_{\mathcal{E}} = \mathrm{span}\{\bm{v}_1,\ldots,\bm{v}_N\}$. Similarly, we have $\bm{\Phi}_c^\ast \circ \bm{\Phi}_c = \sum_{i=1}^N \lambda_i\bm{\epsilon}_i\bm{\epsilon}_i^\ast$ and $\mathcal{E} = \mathrm{span}\{\bm{\epsilon}_1,\ldots,\bm{\epsilon}_N\}$. The identity over the (primal) finite feature space $\mathcal{H}_{\mathcal{E}}$ can now be defined as $\bm{I}_{\mathcal{H}_{\mathcal{E}}} = \sum_{i=1}^N \bm{v}_i\bm{v}_i^\ast$ and the identity over the (dual) kernel space $\mathcal{E}$ as $\bm{I}_{\mathcal{E}} = \sum_{i=1}^N \bm{\epsilon}_i\bm{\epsilon}_i^\ast$. This is synthetized in the two first columns of Table~\ref{tab:spaces}. The identity over $\mathcal{H}$ reads $\bm{I}_{\mathcal{H}} = \bm{I}_{\mathcal{H}_{\mathcal{E}}} + \mathbb{P}_{\mathcal{H}_{\mathcal{E}}^\perp}$, with $\mathbb{P}_{\mathcal{H}_{\mathcal{E}}^\perp}$ a projector over the null space of $\bm{\Phi}_c \circ \bm{\Phi}_c^\ast$. It most be noted that it may happen that these basis may contain too much basis vectors if the two operators $\PaPm$ and $\PPam$ are not of full rank. In particular, this is the case when $\mathrm{dim}(\mathcal{H}) = d$ is finite and $d < N$. In this particular case, we would also have $\mathrm{dim}(\mathcal{H}_{\mathcal{E}}) = \mathrm{dim}(\mathcal{E}) = d$. Without loss of generality, we will assume that this is not the case. Similarly, we will neglect the case $N > d$ as we could just neglect the null space of $\PaPm$.





\textbf{Notations}. We can now define our probabilistic model over $\mathcal{H}_{\mathcal{E}}$. We will therefore use the notation $\phi$ instead of $\varphi$ to consider the feature map in our finite dimensional subspace $\mathcal{H}_{\mathcal{E}}$. More formally, we have $\phi : \mathcal{X} \rightarrow \mathcal{H}_{\mathcal{E}}:\bm{I}_{\mathcal{H}_{\mathcal{E}}}\circ \varphi$ and following from that  $\phi_c : \mathcal{X} \rightarrow \mathcal{H}_{\mathcal{E}}:\bm{I}_{\mathcal{H}_{\mathcal{E}}}\circ \varphi_c$.  In particular, we have the observations $\bm{\phi}_i = \phi(\bm{x}_i) = \bm{\varphi}_i$ and $\bm{\phi}_c = \bm{\varphi}_c$, as those are linear combinations of the basis. For the sake of readability, we will write $\bm{\phi} = \phi(\bm{x})$, the image of a random variable $\bm{x} \in \mathcal{X}$ and refer to it as a \emph{feature} observation or representation. 
Given any Hilbert space, $\bm{a}$ an element of it and a linear operator $\bm{\Sigma}$ from and to that space, we consider the \emph{multivariate normal distribution} $\bm{a} \sim \mathcal{N}\bigl(\bm{b},\bm{\Sigma}\bigr)$ as the distribution with density $\frac1Z\exp\bigl(-\frac12(\bm{a}-\bm{b})^\ast\bm{\Sigma}^{-1}(\bm{a}-\bm{b})\bigr)$. 
It is well defined if $Z$ is non-zero and finite.


\begin{table}[t]
    \centering




       
    \resizebox{\columnwidth}{!}{
    \begin{tabular}{llccc}
        \toprule
        \multicolumn{2}{l}{\textbf{Dimension}}
        & $d$ & $N$ & $q$ \\ \midrule

        \parbox[t]{2mm}{\multirow{3}{*}{\rotatebox[origin=c]{90}{Primal}}}
        & \textbf{Space}
        & $\mathcal{H} (= \mathbb{R}^d)$ & $\mathcal{H}_{\mathcal{E}}\subset \mathcal{H}$ & $\mathcal{H}_{\mathcal{L}}\subset \mathcal{H}_{\mathcal{E}}$ \\

        & \textbf{Canon. Basis}
        & $\{\bm{u}_i\}_{i=1}^d$ & N.A. & N.A. \\

        & \textbf{Other Basis}
        & N.A. & $\{\bm{v}_i\}_{i=1}^N$ & $\{\bm{\varrho}_p\}_{p=1}^q$ \\

        \midrule

        \parbox[t]{2mm}{\multirow{3}{*}{\rotatebox[origin=c]{90}{Dual}}}
        & \textbf{Space}
        & N.A. & $\mathcal{E} = \mathbb{R}^N$ & $\mathcal{L} \subset \mathcal{E}$ \\

        & \textbf{Canon. Basis}
        & N.A. & $\{\bm{e}_i\}_{i=1}^N$ & N.A. \\
        
        & \textbf{Other Basis}
        & N.A. & $\{\bm{\epsilon}_i\}_{i=1}^N$ & $\{\bm{r}_p\}_{p=1}^q$ \\
        \bottomrule
    \end{tabular}}
    \caption{Different spaces with their dimension and the canonical orthonormal basis if it applies, as well as another (non-canonical) basis when used throughout this paper. The equality $\mathcal{H} = \mathbb{R}^d$ only makes sense if $\mathrm{dim}(\mathcal{H}) =d$ is finite.}
    \label{tab:spaces}
\end{table}








\section{Primal Model}
We will now essentially follow the work of~\cite{ppca} and redefine the model distributions. This section corresponds to the primal formulation and we only consider the feature representations. It does not yet introduce the kernel representations, which will appear in the dual formulation (Section~\ref{sec:dual-model}).

\subsection{Model and Latent Space}
\textbf{Factor Analysis.} The starting point is to consider a \emph{factor analysis} relationship~\cite{bartholomew2011latent,basilevsky2009statistical} between the feature observations $\bm{\phi}$ and the latent variables $\bm{h}$. In particular, we consider
\begin{equation}
\label{eq:factor}
    \bm{\phi} = \bm{W}\bm{h} + \bm{\mu} + \bm{\zeta}.
\end{equation}


The observations $\bm{\phi}$ live in the primal space $\mathcal{H}_{\mathcal{E}}$ of dimension $N$. We consider an isotropic normal noise $\bm{\zeta} \sim \mathcal{N}\bigl(\bm{0},\sigma^2\bm{I}_{\mathcal{H}_{\mathcal{E}}}\bigr)$ of variance $\sigma^2 \in \mathbb{R}_{>0}$ and a mean $\bm{\mu} \in\mathcal{H}_{\mathcal{E}}$. 

\textbf{Latent Space $\mathcal{L}$.} The latent variables $\bm{h}$ on the other hand live in a latent dual space $\mathcal{L} \subset \mathcal{E}$ of dimension $q \leq N$. They are related by a primal \emph{interconnection linear operator} $\bm{W}$. As it was the case before with $\bm{\Phi}$, the interconnection operator does not project to the full space $\mathcal{H}_{\mathcal{E}}$ because of its reduced dimensionality. It therefore projects to yet another feature space $\mathcal{H}_{\mathcal{L}} \subset \mathcal{H}_{\mathcal{E}}$, which acts as the primal equivalent of the latent space $\mathcal{L}$. The equality of these two spaces only holds if $q = N$. We will therefore consider the mappings $\bm{W}^\ast : \mathcal{H}_{\mathcal{E}} \rightarrow \mathcal{L}$ and $\bm{W}: \mathcal{L} \rightarrow \mathcal{H}_{\mathcal{L}}$. The identity over $\mathcal{L}$ can be written as $\bm{I}_{\mathcal{L}} = \sum_{p=1}^q \bm{r}_p\bm{r}_p^\ast$, over $\mathcal{H}_{\mathcal{L}}$ as $\bm{I}_{\mathcal{H}_{\mathcal{L}}} = \sum_{p=1}^q \bm{\varrho}_p\bm{\varrho}_p^\ast$ and finally the identity over $\mathcal{H}_{\mathcal{E}}$ rewritten as $\bm{I}_{\mathcal{H}_{\mathcal{E}}} = \bm{I}_{\mathcal{H}_{\mathcal{L}}} + \mathbb{P}_{\mathcal{H}_{\mathcal{L}}^\perp}$, with $\mathbb{P}_{\mathcal{H}_{\mathcal{L}}^\perp}$ as a projector over the null space of $\bm{W}^\ast \circ \bm{W}$. This is summarized in the last column of Table~\ref{tab:spaces}.

\subsection{Feature Distributions}
\textbf{Latent-Based Generation.} The relation between the feature observations and the latent variables being set up (Eq.~\eqref{eq:factor}), we can derive the conditional probability of the feature observations given a latent variable:
\begin{equation}
\label{eq:conditional-feature}
\bm{\phi} | \bm{h} \sim \mathcal{N}\left(\bm{W}\bm{h} - \bm{\mu},\sigma^2\bm{I}_{\mathcal{H}_{\mathcal{E}}}\right).  
\end{equation}

As discussed earlier, we see that the latent variables do not participate to the full scope of the observations in $\mathcal{H}_{\mathcal{E}}$, but only to their component in $\mathcal{H}_\mathcal{L}$. The rest is only constituted from the isotropic normal noisy mean. This distribution can be interpreted as a generative one: given a latent variable, we can sample a variable in feature space.

\textbf{Absolute Generation}. Considering the latent prior $\bm{h} \sim \mathcal{N}\bigl(\bm{0},\bm{I}_{\mathcal{L}}\bigr)$, we can derive the marginal distribution of the observations in feature space:
\begin{equation}
\label{eq:data-distr}
    \bm{\phi} \sim \mathcal{N}\left(\bm{\mu},\bm{W}\circ\bm{W}^\ast + \sigma^2\bm{I}_{\mathcal{H}_{\mathcal{E}}}\right).
\end{equation}
It can be considered as the data distribution of the model. Sampling from it also means generating feature representations in a more absolute way, \emph{i.e.}, without considering any latent variable, or more precisely considering a random latent variable according to its prior. As a consequence of Eq.~\eqref{eq:conditional-feature} and the isotropic aspect of the latent prior, we see that the observations are only non-isotropically distributed in $\mathcal{H}_{\mathcal{L}}$. Again, the rest is only the isotropically normally noisy mean. In other words, this means that the model parameter $\bm{W}$ only influences $\bm{\phi}$ for its components in $\mathcal{H}_{\mathcal{L}}$. 

\subsection{Training the Model}
\textbf{Maximum Likelihood.} As we now have the marginal distribution of the model (Eq.~\eqref{eq:data-distr}), the goal is to find the optimal hyperparameters $\bm{W}$ and $\bm{\mu}$ to match the set of observations $\{\bm{\phi}_i\}_{i=1}^N$. One way to determine them is by maximizing the likelihood of our observations. 
The \emph{Maximum Likelihood} (ML) estimator for the hyperparameters is given by:
\begin{eqnarray}
    \bm{\mu}_{\mathrm{ML}} &=& \bm{\phi}_c,\\
    \bm{W}_{\mathrm{ML}} &=& \sum_{p=1}^q \sqrt{\lambda_p/N-\sigma^2}\bm{v}_p\bm{r}_p^\ast, \label{eq:w-ml}
\end{eqnarray}
with $\left\{\left(\lambda_p,\bm{v}_p\right)\right\}_{p=1}^q$ the $q$ dominant eigenpairs of $\PPam$ ($\lambda_1\geq\cdots\geq\lambda_q\geq\cdots\lambda_N$), and $\left\{\bm{r}_p\right\}_{p=1}^q$ and arbitrary orthonormal basis of the latent space $\mathcal{L}$. The choice for the latter basis is arbitrary and makes the model rotational invariant in latent space. An additional condition is that $\sigma^2 \leq \lambda_{q}/N$. 
It is not surprising to see that the optimal mean $\bm{\mu}_{\mathrm{ML}}$ corresponds to the mean of the observations $\bm{\phi}_c$. We observe that $\bm{W}_{\mathrm{ML}}$ corresponds to the eigendecomposition of the centered covariance, at the exception that the noise assumption is substracted from its spectrum. By looking back at Eq.~\eqref{eq:factor}, it makes sense to avoid the noise in $\bm{W}_{\mathrm{ML}}$ as it is still going to be added by the term $\bm{\zeta}$.

\textbf{Noise Variance.} Maximizing the likelihood 
as a function of $\sigma^2$ leads to
    \begin{equation}
    \label{eq:mle-sigma2}
        \sigma^2_{\mathrm{ML}} = \frac{1}{N(N-q)}\sum_{p=q+1}^N \lambda_p.
    \end{equation}
The eigenvalue $\lambda_p$ corresponds to the variance for each component $\bm{v}_p$ of the covariance $\PPam$. The total variance of the data, noise included, is equal to $\frac1N\sum_{p=1}^N \lambda_p$ and the variance learned by the model through the primal interconnection operator to $\frac1N\sum_{p=1}^q\lambda_p$. Hence, the maximum likelihood estimator for the noise variance $\sigma^2_{\mathrm{ML}}$ can be interpreted as the mean of the variance that is discarded by the model. It also verifies the earlier condition that $\sigma^2 \leq \lambda_q/N$, as the eigenvalues are taken in descending order. It can be interpreted as the normalized mean variance of the left over eigendirections, \emph{i.e.} the orthogonal space of the latent space: $\mathcal{L}^\perp = \mathcal{E}\backslash\mathcal{L}$. By consequence, we may decide to choose the latent dimension $q = \mathrm{dim}(\mathcal{L})$ and deduct $\sigma^2_{\mathrm{ML}}$. In the opposite, we may also decide to set an arbitrary $\sigma^2$ and deduct the latent dimension $q$ instead. We therefore can consider either $\sigma^2$ or $q$ as an additional hyperparameter. We must however keep in mind that this is strongly going to be influenced by the distribution of the eigenvalues and that the latent dimension $q$ for the same $\sigma^2$ may heavily vary from application to application. 

\textbf{Uncentered Features.} We may also consider not to consider the mean as an optimizable hyperparameter and set it arbitrarily to $\bm{\mu} = \bm{0}$. In this case, Eq.~\eqref{eq:w-ml} would be the same at the difference that the $\bm{W}_{\mathrm{ML}}$ would be constructed from the dominant eigenpairs of the uncentered covariance $\bm{\Phi} \circ \bm{\Phi}^\ast$ instead of its centered counterpart $\PPam$.


\subsection{Dimensionality Reduction in Feature Space}
\label{sec:dim-red-primal}
\textbf{Latent Projection.} Up to now, we only considered the distribution of the feature variables $\bm{\phi}$. We can also calculate the posterior distribution of the latent variable $\bm{h}$ given the primal feature variable $\bm{\bm{\phi}}$:
\begin{equation}
\label{eq:conditional-latent-primal}
    \bm{h} | \bm{\phi} \sim \mathcal{N}\left(\bm{\Sigma}_{\bm{h}|\bm{\phi}}^{-1} \circ \bm{W}^\ast(\bm{\phi}-\bm{\mu}),\sigma^2\bm{\Sigma}_{\bm{h}|\bm{\phi}}^{-1} \right),
\end{equation}
with $\bm{\Sigma}_{\bm{h}|\bm{\phi}} = \left(\WaW + \sigma^2\Id{\mathcal{L}}\right)^{-1}$. The mean of the distribution can be considered as a pseudo-inverse of the observation $\bm{\phi}$, but regularized by $\sigma^2$. This regularization ensures to avoid the noise. If the prior of the latent variables was isotropic, this is not the case anymore for the posterior. If we consider the maximum likelihood estimator for the primal interconnection operator $\bm{W}_{\mathrm{ML}}$, the variance becomes $\sigma^2\bm{\Sigma}_{\bm{h}|\bm{\phi}}^{-1} = N\sigma^2\sum_{p=1}^q \lambda_p^{-1} \bm{r}_p\bm{r}_p^\ast$. It can be interpreted as the uncertainty for each component of the latent variable $\bm{h}$ (w.r.t. the eigendirection $\bm{r}_p$), due to the noise assumption. By consequence, the greater the explained variance $\lambda_p$ for the eigendirection $\bm{v}_p$ of the covariance $\PPam$, the smaller the corresponding uncertainty on the component $\bm{r}_p$ of the latent vairable $\bm{h}$. For each observation in feature space $\bm{\phi}$, this returns a distribution for the latent variable $\bm{\phi}$ and can therefore be considered as a sort of probabilistic projection in latent space $\mathcal{L}$. 


\textbf{Maximum A Posteriori.} Up to now, we were only considering distributions. The only way to go from a feature representation to a latent variable or the opposite was probabilistic. In order to have a deterministic approach, we need proper mappings. One way is to consider the \emph{Maximum A Posteriori} (MAP) of $\bm{h}$ given $\bm{\phi}$. 
It maps the feature observation $\bm{\phi} \in \mathcal{H}_{\mathcal{E}}$ to latent variable $\bm{h}_{\mathrm{MAP}} \in \mathcal{L}$, hence reducing the dimensionality of any input to that of the latent space. To allow it to work for any input $\bm{\varphi} \in \mathcal{H}$, we may again consider the projection $\bm{\phi} = \bm{I}_{\mathcal{H}_{\mathcal{E}}} \bm{\varphi}$. As $\bm{W}_{\mathrm{ML}}^\ast \circ \bm{I}_{\mathcal{H}_{\mathcal{E}}} = \bm{W}_{\mathrm{ML}}^\ast$: 
\begin{equation}
\label{eq:primal-latent-map}
\begin{split}
    \bm{h}_{\mathrm{MAP}} =& \left( \bm{W}_{\mathrm{ML}}^\ast \circ \bm{W}_{\mathrm{ML}} + \sigma^2\bm{I}_{\mathcal{L}}\right)^{-1} \\
    &\circ \bm{W}_{\mathrm{ML}}^\ast \left(\bm{\varphi}-\bm{\varphi}_c\right).
\end{split}
\end{equation}
To map back to the feature space $\mathcal{H}_{\mathcal{L}}$, we may consider the \emph{maximum a posteriori} of $\bm{\phi}$ given $\bm{h}$ (Eq.~\eqref{eq:data-distr}). This gives
\begin{equation}
\label{eq:phi-map}
    \bm{\phi}_{\mathrm{MAP}} = \bm{W}_{\mathrm{MAP}} \bm{h} + \bm{\phi}_c.
\end{equation}

The final projection reads
\begin{equation}
\begin{split}
    \bm{\phi}_{\mathrm{MAP}} =& \bm{W}_{\mathrm{ML}}\circ \left( \bm{W}_{\mathrm{ML}}^\ast \circ \bm{W}_{\mathrm{ML}} + \sigma^2\bm{I}_{\mathcal{L}}\right)^{-1} \\
    &\circ \bm{W}_{\mathrm{ML}}^\ast\left(\bm{\varphi}-\bm{\varphi}_c\right) + \bm{\phi}_c.
\end{split}
\end{equation}

\textbf{No Noise.} We may also decide not to consider $\sigma^2$ as a parameter to optimize and set it to an arbitrary value. The latent dimensions $q$ could also be set an arbitrary value, without it to be related to the latent dimension $q$ according to Eq.~\eqref{eq:mle-sigma2}. We notice that in the limit of $\sigma^2 \rightarrow 0$, we recover the classical Principal Component Analysis reconstruction scheme. Indeed the conditional probability distributions become exact relations. We also notice that the condition $\sigma^2 \leq \lambda_q/N$ (Prop.~\ref{prop:mle-primal}) is then always satisfied. Furthermore, when $q = \mathrm{dim}(\mathcal{H}_{\mathcal{E}})$, the reconstruction is perfect in $\mathcal{H}_{\mathcal{E}}$ and in particular for our original observations $\{\bm{\varphi}_i\}_{i=1}^N$ and $\bm{\varphi}_c$ (as we have $\bm{\phi}_i = \bm{\varphi}_i$). Indeed, we would have
\begin{equation}
\label{eq:h-map}
    \bm{h}_{\mathrm{MAP}} = \bm{W}_{\mathrm{ML}}^+ \left(\bm{\varphi}-\bm{\varphi}_c\right),
\end{equation}
with $\bm{W}_{\mathrm{ML}}^+$ the Moore-Penrose pseudo-inverse of $\bm{W}_{\mathrm{ML}}$. . We note here the symmetry with Eq.~\eqref{eq:phi-map}. If the maximum likelihood estimator for $\sigma^2$ is to be respected (Eq.~\eqref{eq:mle-sigma2}), this would mean that all components are kept ($\mathcal{L} = \mathcal{E}$) and the model reconstructs the full feature variance. In this case, the primal interconnection operator would become $\bm{W}_{\mathrm{ML}} = \sum_{p=1}^N \sqrt{\lambda_p/N}\bm{v}_p\bm{r}_p^\ast$ and be invertible. Its Moore-Penrose pseudo-inverse would become an exact inverse. Eqs.~\eqref{eq:phi-map} and \eqref{eq:h-map} would become exact opposites and there would be no loss due to the dimensionality reduction as there would be no noise to discard. By consequence, the reduction would become an identity over $\mathcal{H}_{\mathcal{E}}$: $\bm{\phi}_{\mathrm{MAP}} - \bm{\phi}_c = \bm{I}_{\mathcal{H}_{\mathcal{L}}}\left(\bm{\varphi}-\bm{\varphi}_c\right)$.


\section{Dual Model}
\label{sec:dual-model}
\textbf{Kernels without Dual.} In~\cite{pkpca}, the authors made the kernel matrix appear by considering the new observations $\bigl\{ \sum_{i=1}^d\bm{u}_i\bm{u}_j^\ast\phi(\bm{x}_i)\bigr\}_{j=1}^N$. In other words, each new datapoint consists in one particular feature of the feature map, for each original datapoint. If the original datapoints were organized as a matrix in $\mathbb{R}^{N \times d}$, this would correspond to taking its transpose as new datapoints. 
The outer product of the covariance matrix is transformed to the inner product of the kernel matrix. If indeed this formulation makes the kernel appear, it is not a dual formulation of the original problem, but another problem. In this section, we show how the spaces defined hereabove help us build an equivalent dual formulation of the problem.

\textbf{Dual Formulation.} While keeping an equivalence with the primal model, we will now see that we can directly work in dual spaces $\mathcal{E}$ and $\mathcal{L}$ without considering the feature spaces at all, \emph{i.e.} resorting to the primal space $\mathcal{H}$ and its subsets. As we did for the primal feature variable $\bm{\phi}$, we will consider $\bm{k}_c = \Pma (\bm{\phi}-\bm{\phi}_c) = \sum_{i=1}^N k_c(\bm{x},\bm{x}_i)\bm{e}_i$ to represent the image in $\mathcal{E}$, of a random variable $\bm{x} \in \mathcal{X}$. We will refer to it as a \emph{dual feature variable}.
\subsection{Representation}
Considering the dual spaces, we can always express the interconnection operator $\bm{W}$ in the (non-orthonormal) basis $\left\{\bm{\phi}_1 - \bm{\phi}_c,\ldots,\bm{\phi}_N-\bm{\phi}_c\right\}$. As a consequence, we can always write
\begin{equation}
    \bm{W} = \bm{\Phi}_c \circ \bm{A},
\end{equation}
with $\bm{A} : \mathcal{L} \rightarrow \mathcal{L}$, the dual interconnection operator. Given the maximum likelihood estimator for the primal interconnection operator $\bm{W}_{\mathrm{ML}}$, we can directly deduce the dual one:
\begin{equation}
    \bm{A}_{\mathrm{ML}} = \sum_{p=1}^q \sqrt{1/N-\sigma^2\lambda_p^{-1}}\bm{\epsilon}_p\bm{r}_p^\ast,
\end{equation}
with $\left\{\left(\lambda_p,\bm{\epsilon}_p\right)\right\}_{p=1}^q$ the $q$ dominant eigenpairs of $\PaPm$ and $\left\{\bm{r}_p\right\}_{p=1}^q$ an arbitrary orthonormal basis of the latent space $\mathcal{L}$. The rotational invariance of the dual interconnection operator $\bm{A}_{\mathrm{ML}}$ is inherited from its primal counterpart $\bm{W}_{\mathrm{ML}}$. Again, if we consider an optimized mean $\bm{\mu} = \bm{0}$, we would have the relation $\bm{W}_{\mathrm{ML}} = \bm{\Phi} \circ \bm{A}_{\mathrm{ML}}$ with $\bm{A}_{\mathrm{ML}}$ then based on the eigenpairs of the non-centered $\bm{\Phi}^\ast \circ \bm{\Phi}$ instead. Using the same structure for $\bm{A}_{\mathrm{ML}}$, the optimal (primal) interconnection operator $\bm{W}_{\mathrm{ML}}$ could be expressed in the (non-ortonormal) basis $\{\bm{\phi}_1,\ldots,\bm{\phi}_N\}$.


\subsection{Kernel Distributions}
\textbf{Projection and Generation}. We can also consider the dual counterparts of the distributions of the primal model (Eqs.~\eqref{eq:conditional-feature} and \eqref{eq:conditional-latent-primal}). For the sake of simplicity and to avoid heavier equations with non-centered kernels, we will only consider here the equations of the trained model, in particular with $\bm{\mu}_{\mathrm{ML}} = \bm{\phi}_c$ leading to centered kernels:
\begin{eqnarray}
    \bm{k}_c|\bm{h} &\sim& \mathcal{N}\bigl(( \PaPm)\circ \bm{A}_{\mathrm{ML}}\bm{h},\sigma^2\PaPm\bigr),\\
    \bm{h}|\bm{k}_c &\sim& \mathcal{N}\left(\bm{\Sigma}_{\bm{h}|\bm{k}_c}^{-1}\circ \bm{A}_{\mathrm{ML}}\bm{k}_c,\bm{\Sigma}^{-1}_{\bm{h}|\bm{k}_c}\right),
\end{eqnarray}
 with $\bm{\Sigma}_{\bm{h}|\bm{k}_c} = \left(\bm{A}_{\mathrm{ML}}^\ast \circ \bigl(\PaPm\bigr) \circ \bm{A}_{\mathrm{ML}} + \sigma^2 \bm{I}_{\mathcal{L}}\right)^{-1}$.

 \subsection{Dimensionality Reduction in Kernel Space}
\textbf{Maximum A Posteriori.} This now allows us to consider the dimensionality reduction in kernel space in a similar fashion as in Section~\ref{sec:dim-red-primal}. Again we consider the MAP of the latent variable $\bm{h}$ given the kernel representation $\bm{k}_c$:
 \begin{equation}
 \begin{split}
     \bm{h}_{\mathrm{MAP}} =& \left(\bm{A}_{\mathrm{ML}}^\ast \circ \bigl(\PaPm\bigr) \circ \bm{A}_{\mathrm{ML}} + \sigma^2 \bm{I}_{\mathcal{L}}\right)^{-1} \\
     &\circ \bm{A}_{\mathrm{ML}}\bm{k}_c, 
 \end{split}
 \end{equation}
 and similarly with the MAP of the kernel representation $\bm{k}_c$ given the latent variable $\bm{h}$:
 \begin{equation}
     \left(\bm{k}_c\right)_{\mathrm{MAP}} = \left( \PaPm\right)\circ \bm{A}_{\mathrm{ML}}\bm{h}.
 \end{equation}
 As for the primal model, the dimensionality reduction in dual is computed as $\left(\bm{k}_c\right)_{\mathrm{MAP}} = \left( \PaPm\right)\circ \bm{A}_{\mathrm{ML}}\bm{h}_{\mathrm{MAP}}$. 
 
 \textbf{No Noise.} Again, considering $\sigma^2 \rightarrow 0$ makes both dual conditional distributions become exact relations. In a ML context for $\sigma^2$ (Eq.~\eqref{eq:mle-sigma2}), this would imply that $q = \mathrm{dim}(\mathcal{E})$ and we would recover an identity $\left(\bm{k}_c\right)_{\mathrm{MAP}} = \bm{k}_c$, \emph{i.e.} no reduction. Without considering a ML context for $\sigma^2 \rightarrow 0$ and choosing an arbitrary $q \leq \mathrm{dim}(\mathcal{E})$, the reduction become exactly the reconstruction done in KPCA.

\subsection{Kernel Sampling}


\textbf{Probabilistic Sampling.} The dual counterpart of Eq.~\eqref{eq:data-distr} after training is given by
\begin{equation}
    \bm{k}_c \sim \mathcal{N}\left(\bm{0},\bm{A}_{\mathrm{ML}}^\ast \circ \bm{A}_{\mathrm{ML}} +\sigma^2\left(\Pma\circ\Pm\right)^{-1}\right).
\end{equation}
The covariance $\bm{A}_{\mathrm{ML}}^\ast \circ \bm{A}_{\mathrm{ML}} +\sigma^2\left(\Pma\circ\Pm\right)^{-1}$ can be decomposed as $\bm{B}\circ\bm{B}^\ast$, with $\bm{B} : \mathcal{E} \rightarrow \mathcal{E} : N^{-1/2}\sum_{p=1}^q \lambda_p\bm{\epsilon}_p\bm{\varepsilon}_p^\ast + \sum_{p=q+1}^N\sigma \lambda_p^{1/2}\bm{\epsilon}_p\bm{\varepsilon}_p^\ast$ and $\left\{\bm{\varepsilon}_i\right\}_{i=1}^N$ any arbitrary orthonormal basis of the latent space $\mathcal{E}$. This decomposition allows us to sample $\bm{k}_c$ on the trained model with $\bm{k}_c = \bm{B}\bm{\xi}$ with $\bm{\xi} \sim \mathcal{N}(\bm{0},\Id{E})$. We see that $\bm{B}$ is rotational invariant, which is not surprising as this is also the case for the distribution from which $\bm{\xi}$ is sampled. In practice and for simplicity, we may decide too choose the canonical basis for $\left\{\bm{\varepsilon}_i\right\}_{i=1}^N$ as any choice would be identified to the same covariance and to the same sampling of $\bm{k}_c$. We will therefore assume that $\bm{\varepsilon}_i = \bm{e}_i$ for all $i = 1,\ldots,N$. In that particular case, $\bm{B}$ is self-adjoint and by consequence corresponds to the matrix square root of $\bm{A}_{\mathrm{ML}}^\ast \circ \bm{A}_{\mathrm{ML}} +\sigma^2\left(\Pma\circ\Pm\right)^{-1}$.

\textbf{KPCA Sampling} The classical sampling done by KPCA~\cite{schreurs} corresponds to the limit of $\sigma^2 \rightarrow 0$ for an arbitrary latent dimension $q$. Unless the latent dimension is chosen as $q=\mathrm{dim}(\mathcal{E})$, the sampling in that case can never cover $\mathcal{E}$ fully, but rather $\mathcal{L}$, as $\bm{B}$ is not a bijection. The second term of $\bm{B}$ ($\sum_{p=q+1}^N\sigma \lambda_p^{1/2}\bm{\epsilon}_p\bm{\varepsilon}_p^\ast$) allows $\bm{B}$ to be a bijection no matter what is the choice of the latent dimension $q$, as long as $\sigma^2 > 0$. We thus always sample in the full $\mathcal{E}$. This can be observed at Fig.~\ref{fig:gen}.

\begin{figure}[h]
    \centering
    \def\svgwidth{.8\columnwidth}
    \import{img}{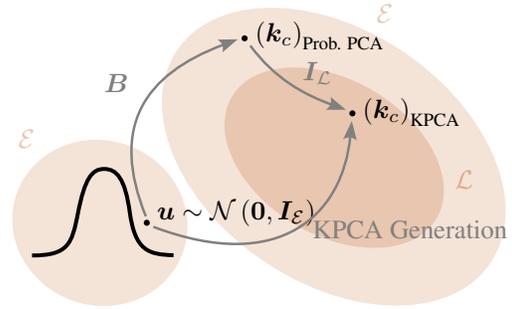}
    \caption{Schematic overview of the dual sampling in Prob. PCA compared to the generation in KPCA.}
    \label{fig:gen}
\end{figure}

    
\section{Experiments}
\begin{figure}
     \centering
     \begin{subfigure}{\columnwidth}
        \vspace{-.8em}
         \centering
         \scriptsize
         \def\svgwidth{.8\columnwidth}
         \import{img}{img-q1-s1.pdf_tex}
         \vspace{-1em}
         \caption{With $q=1$ component, the explained variance is $31.23\%$ and $\sigma^2 = 1.40\%$.}
         \label{fig:toy-1}
     \end{subfigure}
     \hfill
     \begin{subfigure}{\columnwidth}
         \centering
         \scriptsize
         \def\svgwidth{.8\columnwidth}
         \import{img}{img-q3-s1.pdf_tex}
         \vspace{-1em}
         \caption{With $q=3$ components, the explained variance is $54.03\%$ and $\sigma^2 = 0.98\%$.}
         \label{fig:toy-3}
     \end{subfigure}
     \hfill
     \begin{subfigure}{\columnwidth}
         \centering
         \scriptsize
         \def\svgwidth{.8\columnwidth}
         \import{img}{img-q10-s1.pdf_tex}
         \vspace{-1em}
         \caption{With $q=10$ components, the explained variance is $91.93\%$ and $\sigma^2 = 0.20\%$.}
         \label{fig:toy-10}
     \end{subfigure}
        \caption{Visualisation of the Probabilistic PCA reconstruction (in blue) the classical KPCA (in red). Samples generated by are also given (in grey). The dataset contains $N=20$ points (in black).}
        \label{fig:toy}
\end{figure}

\begin{figure}
     \centering
     \begin{subfigure}{\columnwidth}
        \vspace{-.8em}
         \centering
         \footnotesize
         \def\svgwidth{.8\columnwidth}
         \import{img}{mnist-original.pdf_tex}
         \vspace{-1em}
         \caption{Sample of original datapoints.}
         \label{fig:mnist-original}
     \end{subfigure}
     \hfill
     \begin{subfigure}{\columnwidth}
         \centering
         \footnotesize
         \def\svgwidth{.8\columnwidth}
         \import{img}{mnist-recon.pdf_tex}
         \vspace{-1em}
         \caption{Datapoints of Fig.~\ref{fig:mnist-original} after dimensionality reduction.}
         \label{fig:mnist-recon}
     \end{subfigure}
     \hfill
     \begin{subfigure}{\columnwidth}
         \centering
         \footnotesize
         \def\svgwidth{.8\columnwidth}
         \import{img}{mnist-gen.pdf_tex}
         \vspace{-1em}
         \caption{Generated datapoints. The sample $\tilde{\bm{u}}$ is uniform on $\left[-1,1\right]$ for the two first components and zero for the others. The horizontal axis varies in the first component and the vertical one in the second component.}
         \label{fig:mnist-gen}
     \end{subfigure}
        \caption{The Probabilistic PCA dual formulation on the MNIST dataset restricted to 0's and 1's, with $N=500$ datapoints, with $q=2$ components. The explained variance is $27.97\%$ and $\sigma^2=0.14\%$.}
        \label{fig:mnist}
\end{figure}
\textbf{Hilbert Spaces to Matrices.} Working in Hilbert spaces is helpful to treat possibly infinite dimensional feature maps, but not very useful for practical applications. 
Matrix representations are possible in primal if $d$ is finite and in dual if $N$ is finite. It suffices to consider the different canonical basis. For the latent space $\mathcal{L}$, this enforces a unique representation for $\bm{W}_{\mathrm{ML}}$ and $\bm{A}_{\mathrm{ML}}$, but we must keep in mind that they are rotational invariant. All the operators and elements described before are then represented in matrix or vector format (Table~\ref{tab:canonical-representations}). We will use the tilde to denote these matrices and use software-like notation by denoting with $(\cdot)_{i_1:i_2,j_1:j_2}$ the matrix truncated to its $i_1$ to $i_2$ rows and $j_1$ to $j_2$ columns. 

\textbf{Preimage.} Given a dual representation, we will also consider the \emph{kernel smoother} preimage method, as suggested by~\cite{schreurs}: 
\begin{equation}
\label{eq:preimage}
    \hat{\bm{x}} = \frac{\textstyle\sum_{i=1}^N (\tilde{\bm{k}})_i \bm{x}_i}{{\textstyle\sum_{i=1}^N (\tilde{\bm{k}})_i}}.
\end{equation}
In practice, as we work with centered feature maps and kernels, it may be that the kernel smoother may be unstable due to its normalization term. We therefore may consider to add a stabilization term.

\begin{table}[t]
    \centering
    \begin{tabular}{clll}
        \toprule 
        
        & \textbf{Name} & \textbf{Space} & \textbf{Values} \\ \midrule
       
       \parbox[t]{2mm}{\multirow{5}{*}{\rotatebox[origin=c]{90}{Dual}}}
       &$\tilde{\bm{K}}_c$ 
       & $\mathbb{R}^{N \times N}$ 
       & $(\tilde{\bm{k}}_c)_{i,j} = k_c(\bm{x}_i,\bm{x}_{j})$ \\
       
       & $\tilde{\bm{E}}$ 
       & $\mathbb{R}^{N \times N}$ 
       & $\bigl(\tilde{\bm{E}}\bigr)_{i,j} = \bm{e}_i^\ast\bm{\epsilon}_{j}$\\
       
       & $\tilde{\bm{R}}$ 
       & $\mathbb{R}^{q \times q}$ 
       & $\tilde{\bm{R}} = \bm{I}_q$\\

       & $\tilde{\bm{h}}$ 
       & $\mathbb{R}^{q}$ 
       & $\bigl(\tilde{\bm{h}}\bigr)_{p} = \bm{e}_p^\ast\bm{h}$
       \\

       & $\tilde{\bm{k}}_c$ 
       & $\mathbb{R}^{N}$ 
       & $\bigl(\tilde{\bm{k}}_c\bigr)_{i} = \bm{e}_i^\ast\bm{k}_c$
       \\

       \midrule

       \parbox[t]{2mm}{\multirow{2}{*}{\rotatebox[origin=c]{90}{Both}}}
       & $\tilde{\bm{\Lambda}}$ 
       & $\mathbb{R}^{N \times N}_{\geq 0}$ 
       & $\tilde{\bm{\Lambda}} = \mathrm{diag}(\lambda_1,\ldots,\lambda_N)$\\
       
       & $\tilde{\bm{S}}$ 
       & $\mathbb{R}_{\geq 0}^{q \times q}$ 
       & $\tilde{\bm{S}} = \mathrm{diag}(s_1,\ldots,s_q)$\\ 
       
       \midrule

       \parbox[t]{2mm}{\multirow{6}{*}{\rotatebox[origin=c]{90}{Primal}}}
        & $\tilde{\bm{C}}_c$
        & $\mathbb{R}^{d \times d}$ 
        & $\bigl(\tilde{\bm{C}}_c\bigr)_{i,j} = \bigl(\bm{u}_{i}^\ast \bm{\Phi}_c\bigr) \circ \bigl( \bm{u}_{j}^\ast \bm{\Phi}_c\bigr)^\ast$\\

        &$\tilde{\bm{\Phi}}_c$
        &$\mathbb{R}^{d\times N}$ 
        &$\bigl(\tilde{\bm{\Phi}}_c\bigr)_{i,j} = \bm{u}_i^\ast \bm{\Phi}_c \bm{e}_{j}$ \\
        
        &$\tilde{\bm{V}}$
        &$\mathbb{R}^{d\times N}$ 
        &$\bigl(\tilde{\bm{V}}\bigr)_{i,j} = \bm{u}_{i}^\ast\bm{v}_{j}$\\
        
        &$\tilde{\bm{P}}$
        &$\mathbb{R}^{d \times q}$ 
        &$\bigl(\tilde{\bm{P}}\bigr)_{i,p} = \bm{v}_i^\ast\bm{\varrho}_p$ \\ 
        
        &$\tilde{\bm{\phi}}$
        &$\mathbb{R}^{d}$ 
        &$\bigl(\bm{\phi})_{i} = \bm{v}_{i}^\ast\bm{\phi}$\\

        &$\tilde{\bm{\phi}}_c$
        &$\mathbb{R}^{d}$ 
        &$\bigl(\bm{\phi}_c)_{i} = \bm{v}_{i}^\ast\bm{\phi}_c$\\
        \bottomrule
    \end{tabular}
    \caption{Representation of the various operators and elements in their respective canonical basis, as matrices and vectors. The primal representations exist if and only if $\mathrm{dim}(\mathcal{H}) = d$ is finite.}
    \label{tab:canonical-representations}
\end{table}

%
%
%
%

\begin{table}[t]
	\centering
	\resizebox{\columnwidth}{!}{%
		\begin{tabular}{lll}
			\toprule
			
			\textbf{Name} 
			& \textbf{Space}
			& \textbf{Trained} 
			\\ \midrule
			
			$\tilde{\bm{W}}$
			& $\mathbb{R}^{d \times q}$
			& $\tilde{\bm{V}}_{1:N,1:q}\bigl(\tilde{\bm{\Lambda}}_{1:q,1:q}/N - \sigma^2\bm{I}_q \bigr)^{1/2}$
			\\
			
			$\tilde{\bm{A}}$
			& $\mathbb{R}^{N \times q}$
			& $\tilde{\bm{E}}_{1:N,1:q}\bigl(\bm{I}_q/N - \sigma^2\bigl(\tilde{\bm{\Lambda}}_{1:q,1:q}\bigr)^{-1} \bigr)^{1/2}$
			\\ 
			
			$\tilde{\bm{B}}$
			& $\mathbb{R}^{N \times q}$
			& $\tilde{\bm{E}}\tilde{\bm{\Lambda}}^{1/2}
			\left[\begin{array}{cc}
				(N)^{-1/2}\tilde{\bm{\Lambda}}_{1:q,1:q}^{1/2} & \bm{0} \\
				\bm{0} & \sigma \bm{I}_{N-q}
			\end{array}\right]$
			\\
			\bottomrule
	\end{tabular}}
	\caption{Value of the different operators in the canonical basis, after training.}
	\label{tab:trained}
\end{table}

\subsection{Model}
The direct application of the theoretical discussions of the previous sections leads to the decompositions $\tilde{\bm{K}}_c = \tilde{\bm{E}} \tilde{\bm{\Lambda}} \tilde{\bm{E}}^\top$, $\tilde{\bm{C}}_c = \tilde{\bm{V}} \tilde{\bm{\Lambda}} \tilde{\bm{V}}^\top$, $\tilde{\bm{\Phi}}_c = \tilde{\bm{V}} \tilde{\bm{\Lambda}}^{1/2}\tilde{\bm{E}}^\top$. The value of the operators after training are given in Table~\ref{tab:trained}. Once the model is trained, we can verify that $\tilde{\bm{W}} = \tilde{\bm{\Phi}}_c\tilde{\bm{A}}$.
We can also have a look at the hidden variables. A way to do it is to consider the MAP of $\bm{h}$ given $\bm{\phi}$ or $\bm{k}$. We have
\begin{align}
    \bm{h}_{\mathrm{MAP}} &= N\tilde{\bm{\Lambda}}^{-1}_{1:q,1:q}\tilde{\bm{A}}^\top\tilde{\bm{k}}_c  \,\;\quad\quad \text{(if $\mathrm{rank}(\tilde{\bm{K}}_c) \geq q$)},\\
    &= N\tilde{\bm{\Lambda}}^{-1}_{1:q,1:q}\tilde{\bm{W}}^\top\bigl(\tilde{\bm{\phi}}-\tilde{\bm{\phi}}_c\bigr) \quad \text{(if $\mathcal{H}$ is finite)},
\end{align}
and
\begin{eqnarray}
    \bigl(\bm{k}_c\bigr)_{\mathrm{MAP}} &= \tilde{\bm{K}}_c\tilde{\bm{A}} \tilde{\bm{h}}  \qquad\quad\; \text{(if $\mathrm{rank}(\tilde{\bm{K}}_c) \geq q$)},\\
    \bm{\phi}_{\mathrm{MAP}} &= \tilde{{\bm{W}}}\tilde{\bm{h}} + \tilde{\bm{\phi}}_c \qquad\qquad \text{(if $\mathcal{H}$ is finite)}.
\end{eqnarray}

As developed in Section~\ref{sec:dual-model}, we can easily generate samples in both feature and kernel representations. For the latter and in canonical basis, it becomes
\begin{eqnarray}
    \tilde{\bm{k}}_c = \tilde{\bm{B}}\tilde{\bm{u}}, \qquad \text{with $\tilde{\bm{u}} \sim \mathcal{N}(\bm{0},\bm{I}_N)$}.
\end{eqnarray}

\subsection{Examples}
As the primal case is already treated by~\cite{ppca}, we consider here the model in its dual formulation. A toy example can by found in Fig.~\ref{fig:toy}. We use an RBF kernel $k(\bm{x},\bm{y}) = \exp\bigl(-\lVert \bm{x}-\bm{y}\rVert_2^2/(2\gamma^2)\bigr)$ with bandwidth $\gamma = 2$. As the number of components increases, the mean variance of the $N-q$ unused components $\sigma^2$ becomes smaller and the model tends to the classical KPCA model. Another way the reduce $\sigma^2$ is to increase the number of components $q$, with $\sigma^2 \rightarrow 0$ when $q \rightarrow N$. This can be observed in Fig.~\ref{fig:toy-10}: the Probabilistic PCA model resembles closely the KPCA model, whereas more variance is left over, \emph{i.e.} not projected back, in Fig.s~\ref{fig:toy-1} and \ref{fig:toy-3}. The results of the generation is Gaussian, which is a consequence of the linearity of the preimage method chosen (Eq.~\eqref{eq:preimage}). Here again, as the number of components increases and $\sigma^2$ decreases, the model is allowed to project back more variance and the distribution becomes wider. Another example on the MNIST dataset~\cite{mnist} with the RBF kernel with $\gamma = 4$ is given at Fig.~\ref{fig:mnist}.
\section{Conclusion}
\textbf{Probabilistic Interpretation.} By reformulating the Prob. PCA model in Hilbert space, we were able to define a formulation of it. Likewise Prob. PCA in primal was englobing classical PCA (with $\sigma^2\rightarrow 0$), Prob. PCA in dual is also englobing KPCA in the same limit. Furthermore, we are now able to sample in dual space, enhancing the understanding of the generation done with KPCA.

\textbf{Limitations.} As most kernel methods, the model is still limited by the need of a preimage method to go back to the input space once a sample is projected or generated. Furthermore, training the model in dual required to find the $q$ first eigenvalues of the kernel matrix, which may become expensive as the number of datapoints $N$ increases. Generating renders the problem even worse as it requires the computation of all eigenvalues. The model also requires to determine a $\sigma^2$ or alternatively a latent dimension $q$.

\section*{Acknowledgements}
{\footnotesize%
	\begin{spacing}{1.04}
		EU: The research leading to these results has received funding from the European Research Council under the European Union's Horizon 2020 research and innovation program / ERC Advanced Grant E-DUALITY (787960). This paper reflects only the authors' views and the Union is not liable for any use that may be made of the contained information. Research Council KUL: Tensor Tools for Taming the Curse iBOF/23/064, Optimization frameworks for deep kernel machines C14/18/068. Flemish Government: FWO projects: GOA4917N (Deep Restricted Kernel Machines: Methods and Foundations), PhD/Postdoc grant. This research received funding from the Flemish Government (AI Research Program). Henri De Plaen and Johan A. K. Suykens are also affiliated to Leuven.AI -- KU Leuven institute for AI, B-3000, Leuven, Belgium. 
	\end{spacing}
}


\bibliography{bib}
\bibliographystyle{misc/icml2023}

\newpage
\appendix
\onecolumn
\setcounter{proposition}{0}
\setcounter{lemma}{0}
\setcounter{corollary}{0}
\setcounter{definition}{0}
\section{Theoretical Development}
\label{app:theory}
For brevity of notations, we will define the \emph{expectation value} as $\lVert \bm{a} \rVert_{\bm{\Sigma}} ^2=\bm{a}^\ast\bm{\Sigma}\bm{a} \in \mathbb{R}_{\geq 0}$, with $\bm{a}$ an element of a Hilbert space and $\bm{\Sigma}$ a linear operator from and to that space. The norm is a particular case with the identity as operator $\bm{\Sigma} = \bm{I}$. The density function of the multivariate normal distribution $\bm{a} \sim \mathcal{N}(\bm{b},\bm{\Sigma})$ can be rewritten as $\frac1Z\exp\bigl(-\frac12 \lVert \bm{a}-\bm{b}\rVert^2_{\bm{\Sigma^{-1}}}\bigr)$.

\subsection{Primal and Dual Spaces}
\label{app:covariance-kernel}
\begin{lemma}
    The operators $\PPam$ and $\PaPm$ are self-adjoint, positive semi-definite and share the same non-zero eigenvalues. In particular, we have the eigenvector relations $\bm{v}_i = (\lambda_i)^{-1/2}\bm{\Phi}_c\bm{\epsilon}_i$ and $\bm{\epsilon}_i = (\lambda_i)^{-1/2}\bm{\Phi}_c^\ast\bm{v}_i$.
\end{lemma}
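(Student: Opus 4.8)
The plan is to treat this as the standard correspondence between $\bm{A}\circ\bm{A}^\ast$ and $\bm{A}^\ast\circ\bm{A}$ for $\bm{A}=\bm{\Phi}_c$, adapted to the Hilbert-space setting. First I would dispatch the two structural properties. Self-adjointness follows immediately from $(\bm{\Phi}_c\circ\bm{\Phi}_c^\ast)^\ast=(\bm{\Phi}_c^\ast)^\ast\circ\bm{\Phi}_c^\ast=\bm{\Phi}_c\circ\bm{\Phi}_c^\ast$, and symmetrically for $\bm{\Phi}_c^\ast\circ\bm{\Phi}_c$. Positive semi-definiteness follows from $\bm{a}^\ast(\bm{\Phi}_c\circ\bm{\Phi}_c^\ast)\bm{a}=\langle\bm{\Phi}_c^\ast\bm{a},\bm{\Phi}_c^\ast\bm{a}\rangle_{\mathcal{E}}=\lVert\bm{\Phi}_c^\ast\bm{a}\rVert^2\geq 0$ for every $\bm{a}\in\mathcal{H}$, and likewise for the dual operator. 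Since $\bm{\Phi}_c=\sum_{i=1}^N(\bm{\varphi}_i-\bm{\varphi}_c)\bm{e}_i^\ast$ has rank at most $N$, both compositions are finite-rank self-adjoint operators, so the spectral theorem supplies a real non-negative spectrum together with orthonormal eigenbases; this is what lets me speak of eigenpairs $(\lambda_i,\bm{v}_i)$ and $(\lambda_i,\bm{\epsilon}_i)$ in the first place.

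The heart of the argument is the shared non-zero spectrum. I would start from an eigenpair $\bm{\Phi}_c^\ast\circ\bm{\Phi}_c\,\bm{\epsilon}_i=\lambda_i\bm{\epsilon}_i$ with $\lambda_i\neq0$ and $\lVert\bm{\epsilon}_i\rVert=1$, and apply $\bm{\Phi}_c$ on the left. Associativity of composition gives $(\bm{\Phi}_c\circ\bm{\Phi}_c^\ast)(\bm{\Phi}_c\bm{\epsilon}_i)=\lambda_i(\bm{\Phi}_c\bm{\epsilon}_i)$, so $\bm{\Phi}_c\bm{\epsilon}_i$ is an eigenvector of $\bm{\Phi}_c\circ\bm{\Phi}_c^\ast$ with the same eigenvalue, provided it is non-zero. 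Its norm is pinned down by $\lVert\bm{\Phi}_c\bm{\epsilon}_i\rVert^2=\bm{\epsilon}_i^\ast(\bm{\Phi}_c^\ast\circ\bm{\Phi}_c)\bm{\epsilon}_i=\lambda_i$, which is strictly positive exactly because $\lambda_i\neq0$; hence $\bm{\Phi}_c\bm{\epsilon}_i\neq\bm{0}$ and the normalized eigenvector is $\bm{v}_i=(\lambda_i)^{-1/2}\bm{\Phi}_c\bm{\epsilon}_i$, giving the first claimed relation. Running the identical argument with $\bm{\Phi}_c^\ast$ applied to an eigenpair of $\bm{\Phi}_c\circ\bm{\Phi}_c^\ast$ yields $\bm{\epsilon}_i=(\lambda_i)^{-1/2}\bm{\Phi}_c^\ast\bm{v}_i$, and a one-line check $\bm{\Phi}_c^\ast\bm{v}_i=(\lambda_i)^{-1/2}\bm{\Phi}_c^\ast\circ\bm{\Phi}_c\,\bm{\epsilon}_i=(\lambda_i)^{1/2}\bm{\epsilon}_i$ confirms the two relations are mutually inverse rather than independent assertions.

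The step I expect to require the most care is not the existence of the correspondence but its bijectivity, i.e. that the non-zero eigenvalues coincide with matching multiplicities. For this I would fix a non-zero eigenvalue $\lambda$ and show that $\bm{\epsilon}\mapsto(\lambda)^{-1/2}\bm{\Phi}_c\bm{\epsilon}$ sends the $\lambda$-eigenspace of $\bm{\Phi}_c^\ast\circ\bm{\Phi}_c$ into the $\lambda$-eigenspace of $\bm{\Phi}_c\circ\bm{\Phi}_c^\ast$ isometrically: the computation $\langle(\lambda)^{-1/2}\bm{\Phi}_c\bm{\epsilon},(\lambda)^{-1/2}\bm{\Phi}_c\bm{\epsilon}'\rangle=\lambda^{-1}\bm{\epsilon}^\ast(\bm{\Phi}_c^\ast\circ\bm{\Phi}_c)\bm{\epsilon}'=\langle\bm{\epsilon},\bm{\epsilon}'\rangle$ shows it preserves inner products, hence carries an orthonormal basis of one eigenspace to an orthonormal set in the other. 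The analogous map built from $\bm{\Phi}_c^\ast$ is its two-sided inverse by the consistency check above, so the two eigenspaces have equal dimension and the non-zero spectra agree as multisets; this also justifies a posteriori that the paired indexing $(\lambda_i,\bm{v}_i)\leftrightarrow(\lambda_i,\bm{\epsilon}_i)$ in the statement can be chosen coherently. The only genuinely infinite-dimensional worry — whether the spectral theorem is available on $\mathcal{H}$ — is neutralized by finite-rankness, so no compactness machinery beyond that is needed.
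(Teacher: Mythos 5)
Your proposal is correct and takes essentially the same route as the paper's own proof: both arguments apply $\bm{\Phi}_c$ (resp.\ $\bm{\Phi}_c^\ast$) to an eigenvector of the other composition, use the intertwining relation $\bm{\Phi}_c \circ (\bm{\Phi}_c^\ast \circ \bm{\Phi}_c) = (\bm{\Phi}_c \circ \bm{\Phi}_c^\ast) \circ \bm{\Phi}_c$ to transfer eigenvector status, and compute $\lVert\bm{\Phi}_c\bm{\epsilon}_i\rVert^2 = \bm{\epsilon}_i^\ast(\bm{\Phi}_c^\ast \circ \bm{\Phi}_c)\bm{\epsilon}_i = \lambda_i$ to normalize. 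Your extra steps---the explicit self-adjointness and positive semi-definiteness computations, the finite-rank justification of the spectral theorem, and the isometry-of-eigenspaces argument establishing equality of multiplicities---are refinements that the paper's terser proof leaves implicit, but they do not change the underlying approach.
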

\begin{proof}
    The result stated here is inspired from~\cite{convex_pca}. Self-adjointness is a consequence of the definition of the inner product, which also guarantees the positive semi-definiteness. $(\Longrightarrow)$ Let us suppose that $(\PaPm)\bm{e}_i = \lambda_i\bm{\epsilon}_i$ with $\lambda_i \neq 0$. We then have $\bm{\Phi}_c \circ (\PaPm)\bm{\epsilon}_i = (\PPam)\circ\bm{\Phi}_c\bm{\epsilon}_i = \lambda_i \bm{\Phi}_c\bm{\epsilon}_i$. Hence, we have that $\bm{\Phi}_c\bm{\epsilon}_i$ is eigenvector, but not necessarily normalized. In fact, its norm is given by $\bm{\epsilon}_i^\ast\PaPm\bm{\epsilon}_i = \lambda_i$. We thus have the relation $\bm{u}_i = (\lambda_i)^{-1/2}\bm{\Phi}\bm{\epsilon}_i$. $(\Longleftarrow)$ We suppose now that $(\PPam)\bm{u}_i$, which leads to $\bm{\Phi}_c^\ast \circ (\PPam)\bm{u}_i = (\PPa)\circ \bm{\Phi}_c^\ast\bm{u}_i = \lambda_i \bm{u}^\ast$. Again, we have the relation $\bm{\epsilon}_i = (\lambda_i)^{-1/2}\bm{\Phi}_c^\ast\bm{u}_i$.
\end{proof}
\subsection{Primal Model}
\subsubsection{Feature Distribution}
\begin{definition}[Conditional Feature Distribution]
\label{def:conditional-phi}
    Considering $\bm{\phi}, \bm{\mu} \in \mathcal{H}_{\mathcal{E}}$, $\bm{h} \in \mathcal{L}$, a linear operator $\bm{W} : \mathcal{L} \rightarrow \mathcal{H}_{\mathcal{L}}$ and its adjoint $\bm{W}^\ast : \mathcal{H}_{\mathcal{E}} \rightarrow \mathcal{L}$. We define the conditional probability distribution of the primal feature variable $\bm{\phi}$ with variance $\sigma^2 \in \mathbb{R}_{>0}$ as
\begin{equation}
\label{eq:conditional-phi}
    p\left(\bm{\phi} \left| \bm{h}\right.\right) = \frac{1}{\left(\sigma \sqrt{2\pi}\right)^{N}}\exp\left(\frac{-1}{2\sigma^2}\left\lVert\bm{\phi} - \bm{W}\bm{h} - \bm{\mu}\right\rVert_{\bm{I}_{\mathcal{H}_{\mathcal{E}}}}^2\right).
\end{equation}
\end{definition}

The following proposition verifies that the conditional distribution of $\bm{\phi}$ given $\bm{h}$ (Def.~\ref{def:conditional-phi}) is well defined. To ease the readability, we first consider a Lemma.

\label{app:cond-primal-distr}
\begin{lemma}
\label{lemma:exp-1d}
Given values $a \in \mathbb{R}_{>0}$ and $b,c \in \mathbb{R}$, we have the following integral
\begin{equation}
    \int_{\mathbb{R}} \exp\Biggl(- \frac12 ax^2+bx+c\Biggr)\mathrm{d}x = \sqrt{\frac{2\pi}{a}}\exp\left(\frac{b^2}{2a} + c\right).
\end{equation}
\end{lemma}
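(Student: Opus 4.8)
The plan is to prove this standard Gaussian integral by completing the square in the exponent and reducing to the known normalization of a centered Gaussian. First I would rewrite the quadratic form $-\frac12 a x^2 + bx + c$ by completing the square: since $a > 0$, I factor out $-\frac{a}{2}$ from the quadratic and linear terms to obtain
\begin{equation}
-\frac12 a x^2 + bx + c = -\frac{a}{2}\left(x - \frac{b}{a}\right)^2 + \frac{b^2}{2a} + c.
\end{equation}
The constant term $\frac{b^2}{2a} + c$ does not depend on $x$ and therefore factors out of the integral.

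Next I would substitute $y = x - \frac{b}{a}$, which has unit Jacobian and leaves the domain of integration as all of $\mathbb{R}$, so that the remaining integral becomes $\int_{\mathbb{R}} \exp\bigl(-\frac{a}{2}y^2\bigr)\,\mathrm{d}y$. Then I would invoke the classical Gaussian integral $\int_{\mathbb{R}} e^{-t^2}\,\mathrm{d}t = \sqrt{\pi}$ after a further rescaling $t = \sqrt{a/2}\,y$; this gives $\int_{\mathbb{R}} \exp\bigl(-\frac{a}{2}y^2\bigr)\,\mathrm{d}y = \sqrt{2\pi/a}$. Combining this with the constant factor pulled out in the first step yields exactly $\sqrt{2\pi/a}\,\exp\bigl(b^2/(2a) + c\bigr)$, as claimed.

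The computation is entirely routine and there is no genuine obstacle; the only point requiring the hypothesis $a > 0$ is convergence of the integral, which guarantees both that completing the square produces a decaying Gaussian and that the value $\sqrt{2\pi/a}$ is well defined and positive. One could alternatively cite the one-dimensional case of the standard multivariate Gaussian normalization directly, but here it is cleaner to give the short self-contained derivation since the lemma is used repeatedly as a building block for verifying that the higher-dimensional feature distributions of Definition~\ref{def:conditional-phi} are well defined.
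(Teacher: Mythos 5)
Your proof is correct, but it takes a different route from the paper's. You complete the square, shift variables, and reduce to the standard Gaussian normalization $\int_{\mathbb{R}} e^{-t^2}\,\mathrm{d}t = \sqrt{\pi}$. The paper instead exhibits an explicit antiderivative in closed form,
\begin{equation}
\int \exp\Bigl(-\tfrac12 ax^2+bx+c\Bigr)\mathrm{d}x = \sqrt{\frac{\pi}{2a}}\exp\Bigl(\frac{b^2}{2a}+c\Bigr)\,\mathrm{erf}\Bigl(\frac{ax-b}{\sqrt{2a}}\Bigr),
\end{equation}
verifies it by differentiation using $\frac{\mathrm{d}}{\mathrm{d}x}\mathrm{erf}(x) = \frac{2}{\sqrt{\pi}}e^{-x^2}$, and then evaluates the definite integral via the symmetry of $\mathrm{erf}$ and the limit $\lim_{x\to+\infty}\mathrm{erf}(x)=1$. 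The two arguments rest on the same underlying fact --- the value of the Gaussian integral, which in the paper's version is hidden inside the normalization of $\mathrm{erf}$ --- so neither is more elementary in substance. Your version is the more standard textbook derivation and avoids introducing the error function at all; the paper's version has the mild advantage of producing the primitive explicitly, so the only analytic inputs are a derivative check and a limit, with no change of variables. Both correctly use $a>0$ only to ensure convergence (and, in your case, that $\sqrt{2\pi/a}$ is well defined), so your proposal is a valid substitute for the paper's proof.
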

\begin{proof}
We first find the primitive $\int \exp\bigl(- \frac12 ax^2+bx+c\bigr)\mathrm{d}x = \sqrt{\frac{\pi}{2a}}\exp\bigl(\frac{b^2}{2a} + c\bigr)\mathrm{erf}\bigl(\frac{ax-b}{\sqrt{2a}}\bigr)$, with the error function defined as $\mathrm{erf} (x) = \frac{2}{\sqrt\pi}\int_0^x e^{-t^2}\,\mathrm dt$. Indeed, we have $\frac{\mathrm{d}}{\mathrm{d}x}\mathrm{erf}(x) = \frac{2}{\sqrt{\pi}}e^{-x^2}$. It suffices then to derivate the primitive to verify them. To conclude the proof, it suffices to notice that the error function is symmetric and that $\lim_{x\rightarrow +\infty} \mathrm{erf}(x) = 1$.
\end{proof}

From now on, we will consider the singular value decomposition of the primal interconnection linear operator $\bm{W} = \sum_{p=1}^q s_p\bm{\varrho}_p \bm{r}_p^\ast$, with $\bm{\varrho}_p \in \mathcal{H}_{\mathcal{E}}$ and $\bm{r}_p^\ast \in \mathcal{L}^\ast$ two sets of orthonormal variables and $s_p \in \mathbb{R}_{>0}$ the singular values.

\begin{proposition}
\label{prop:conditional-phi}
    Def.~\ref{def:conditional-phi} is a well-defined distribution. More specifically, its measure is normalized:
    \begin{equation}
        \int_{\mathcal{L}}  p\left(\bm{\phi} \left| \bm{h}\right.\right)\mathrm{d}\bm{\phi} = 1.
    \end{equation}
\end{proposition}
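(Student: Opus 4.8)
The plan is to reduce the abstract Hilbert-space integral to a product of one-dimensional Gaussian integrals, each of which is dispatched by Lemma~\ref{lemma:exp-1d}. First I would fix an orthonormal basis of the finite-dimensional space $\mathcal{H}_{\mathcal{E}}$ — for instance the eigenvectors $\{\bm{v}_i\}_{i=1}^N$ of $\PPam$ introduced earlier, for which $\bm{I}_{\mathcal{H}_{\mathcal{E}}} = \sum_{i=1}^N \bm{v}_i\bm{v}_i^\ast$ — and identify $\mathcal{H}_{\mathcal{E}}$ with $\mathbb{R}^N$ by writing $\bm{\phi} = \sum_{i=1}^N x_i\bm{v}_i$ with $x_i = \bm{v}_i^\ast\bm{\phi}$, so that $\mathrm{d}\bm{\phi} = \prod_{i=1}^N \mathrm{d}x_i$ (the coordinate map is an isometry, hence measure-preserving). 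The mean $\bm{W}\bm{h}+\bm{\mu}$ is a fixed element of $\mathcal{H}_{\mathcal{E}}$ whose coordinates I denote $m_i = \bm{v}_i^\ast(\bm{W}\bm{h}+\bm{\mu})$. Because the covariance is isotropic, $\norm{\bm{\phi}-\bm{W}\bm{h}-\bm{\mu}}^2_{\bm{I}_{\mathcal{H}_{\mathcal{E}}}} = \sum_{i=1}^N (x_i-m_i)^2$, so the integrand factorizes across coordinates and the normalization integral splits into $N$ identical one-dimensional integrals.

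Next I would apply Lemma~\ref{lemma:exp-1d} to each factor. Expanding $-\tfrac{1}{2\sigma^2}(x_i-m_i)^2$ puts it in the form $-\tfrac12 a x_i^2 + b x_i + c$ with $a = 1/\sigma^2$, $b = m_i/\sigma^2$ and $c = -m_i^2/(2\sigma^2)$. The lemma then yields $\sqrt{2\pi/a}\,\exp(b^2/(2a)+c)$, and the key cancellation is that $b^2/(2a)+c = m_i^2/(2\sigma^2) - m_i^2/(2\sigma^2) = 0$, so each factor equals $\sqrt{2\pi\sigma^2} = \sigma\sqrt{2\pi}$, independently of $m_i$. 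Multiplying the $N$ factors gives $(\sigma\sqrt{2\pi})^N$, which exactly cancels the prefactor $1/(\sigma\sqrt{2\pi})^N$ in Definition~\ref{def:conditional-phi}, leaving the value $1$.

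The computation itself is routine; the only point requiring care is the measure-theoretic setup, namely interpreting $\int p(\bm{\phi}\mid\bm{h})\,\mathrm{d}\bm{\phi}$ as integration against Lebesgue measure on the $N$-dimensional space $\mathcal{H}_{\mathcal{E}}$ through the chosen basis (so the domain is effectively $\mathcal{H}_{\mathcal{E}}$ rather than $\mathcal{L}$). I expect no genuine obstacle: the isotropy of the covariance $\sigma^2\bm{I}_{\mathcal{H}_{\mathcal{E}}}$ removes any eigenvalue bookkeeping and any cross terms, and the role of Lemma~\ref{lemma:exp-1d} is essentially to certify the translation invariance of the Gaussian integral, so that the shift by the mean $\bm{W}\bm{h}+\bm{\mu}$ leaves the total mass unchanged.
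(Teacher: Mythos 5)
Your proposal is correct and follows essentially the same route as the paper's proof: expand $\bm{\phi}$ in the orthonormal basis $\{\bm{v}_i\}_{i=1}^N$ of $\mathcal{H}_{\mathcal{E}}$, split the integral into one-dimensional factors via Fubini, apply Lemma~\ref{lemma:exp-1d} to each, and observe that the mean-dependent terms cancel so that the product exactly matches the normalization constant $(\sigma\sqrt{2\pi})^N$. The only difference is bookkeeping: you keep the completed square $(x_i-m_i)^2$ with the mean as an opaque vector, whereas the paper expands the mean through the singular value decomposition of $\bm{W}$ and verifies the cancellation $C=\tfrac{\sigma^2}{2}\sum_i b_i^2$ globally --- your version is tidier but not a genuinely different argument.
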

\begin{proof}
    By considering the singular value decomposition of $\bm{W}$, the term inside the exponential becomes
\begin{equation}
    -\frac{1}{2\sigma^2}\sum_{i=1}^N\Biggl\{\left(\bm{v}_i^\ast \bm{\phi}\right)^2 
    - \left(\bm{v}_i^\ast \bm{\phi}\right) \Bigl(2\sum_{p=1}^q\big(\left(\bm{r}_p^\ast \bm{h}\right)\left(\bm{\varrho}_p^\ast\bm{v}_i\right)s_p\big)+\left(\bm{v}_i^\ast \bm{\mu}\right)\Bigr)\Biggr\} + C,
\end{equation}
with $C = -\frac{1}{2\sigma^2}\left(\bm{W}\bm{h} + \bm{\mu}\right)^\ast\left(\bm{W}\bm{h} + \bm{\mu}\right) = -\frac{1}{2\sigma^2}\sum_{i=1}^N\sum_{p=1}^q \big((\bm{r}_p^\ast \bm{h})(\bm{\varrho}_p^\ast\bm{v}_i)s_p+(\bm{v}_i^\ast \bm{\mu}\big)^2$. Integrating over $\mathcal{H}_{\mathcal{E}}$ means integrating over $\mathrm{span}\left\{ \bm{v}_i,\ldots,\bm{v}_N\right\}$, thus for all $\left(\bm{v}_i^\ast \bm{\phi}\right) \in \mathbb{R}$. Using Lemma~\ref{lemma:exp-1d} and by Fubini's theorem, we have
\begin{eqnarray}
    \int_\mathcal{F} p\left(\bm{\phi} \left| \bm{h}, \bm{\varrho}\right.\right)\mathrm{d}\bm{\phi} &=&\left(\sigma\sqrt{2\pi}\right)^{-N}\exp(-C)\int_{\mathbb{R}^N}\exp\left(\sum_{i=1}^N\left(-\frac{1}{2\sigma^2}x_i^2+b_ix_i\right)\right)\mathrm{d}\bm{x}, \\
    &=&\left(\sigma\sqrt{2\pi}\right)^{-N}\exp(-C)\prod_{i=1}^N\int_{\mathbb{R}}\exp\left(-\frac{1}{2\sigma^2}x_i^2+b_ix_i\right)\mathrm{d}x_i, \\
    &=&\left(\sigma\sqrt{2\pi}\right)^{-N}\exp(-C)\prod_{i=1}^N \sigma \sqrt{2\pi}\exp\left(\frac12\sigma^2b_i^2\right), \\
    &=&\exp\left(\frac12\sigma^2\sum_{i=1}^N b_i^2-C\right), 
\end{eqnarray}
with $b_i = \frac{1}{\sigma^2}\big((\bm{r}_p^\ast \bm{h})(\bm{\varrho}_p^\ast\bm{v}_i)s_p+(\bm{v}_i^\ast \bm{\mu} \big)$. The proof is concluded by observing that $C = \frac12\sigma^2\sum_{i=1}^N b_i^2$.
\end{proof}

We can now consider the marginal distribution of the feature representation $\bm{\phi}$.
\label{app:posterior-primal-distr}
\begin{lemma}
\label{lemma:eigenvalues}
    Both linear operators $\bm{W}\circ\bm{W}^\ast + \sigma^2\bm{I}_{\mathcal{H}_{\mathcal{E}}}$ and $\bm{W}^\ast\circ\bm{W} + \sigma^2\bm{I}_{\mathcal{L}}$ are positive definite, of full rank and invertible. The linear operator $\bm{W}\circ\bm{W}^\ast + \sigma^2\bm{I}_{\mathcal{H}_{\mathcal{E}}}$ shares the $q$ non-zero eigenvalues of $\bm{W}^\ast\circ\bm{W} + \sigma^2\bm{I}_{\mathcal{L}}$, with the remaining $N-q$ eigenvalues being equal to $\sigma^2$. In particular, we have
    \begin{align}
        &\mathrm{eig}\left(\WaW + \sigma^2\Id{L}\right) = \left\{s^2_p+\sigma^2\right\}_{p=1}^q, \\
        &\mathrm{eig}\left(\WWa + \sigma^2\bm{I}_{\mathcal{H}_{\mathcal{E}}}\right) = \left\{s^2_p+\sigma^2\right\}_{p=1}^q\cup \left\{\sigma^2\right\}^{N-q}.
    \end{align}
\end{lemma}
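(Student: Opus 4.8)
The plan is to exploit the singular value decomposition $\bm{W} = \sum_{p=1}^q s_p \bm{\varrho}_p \bm{r}_p^\ast$ introduced just above the statement, which diagonalizes both composite operators simultaneously. First I would compute $\WaW$ directly: using the orthonormality $\bm{\varrho}_p^\ast \bm{\varrho}_{p'} = \delta_{pp'}$, the cross terms collapse and one obtains $\WaW = \sum_{p=1}^q s_p^2 \bm{r}_p \bm{r}_p^\ast$. Since $\{\bm{r}_p\}_{p=1}^q$ is an orthonormal basis of $\mathcal{L}$, adding $\sigma^2 \Id{L} = \sigma^2 \sum_{p=1}^q \bm{r}_p \bm{r}_p^\ast$ yields the spectral decomposition $\WaW + \sigma^2 \Id{L} = \sum_{p=1}^q (s_p^2 + \sigma^2) \bm{r}_p \bm{r}_p^\ast$. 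The eigenvalues are therefore exactly $\{s_p^2 + \sigma^2\}_{p=1}^q$, all strictly positive because $\sigma^2 > 0$, which immediately gives positive definiteness, full rank and invertibility on the $q$-dimensional space $\mathcal{L}$.

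For the primal operator, the analogous computation gives $\WWa = \sum_{p=1}^q s_p^2 \bm{\varrho}_p \bm{\varrho}_p^\ast$, now using $\bm{r}_p^\ast \bm{r}_{p'} = \delta_{pp'}$. The subtlety here is that this operator has rank $q$ but lives on the $N$-dimensional space $\mathcal{H}_{\mathcal{E}}$, so the identity $\bm{I}_{\mathcal{H}_{\mathcal{E}}}$ contributes on directions outside the span of the $\bm{\varrho}_p$'s. To handle this cleanly I would extend the orthonormal family $\{\bm{\varrho}_p\}_{p=1}^q$ to a full orthonormal basis $\{\bm{\varrho}_p\}_{p=1}^N$ of $\mathcal{H}_{\mathcal{E}}$, so that $\bm{I}_{\mathcal{H}_{\mathcal{E}}} = \sum_{p=1}^N \bm{\varrho}_p \bm{\varrho}_p^\ast$. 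The sum then splits as $\WWa + \sigma^2 \bm{I}_{\mathcal{H}_{\mathcal{E}}} = \sum_{p=1}^q (s_p^2 + \sigma^2) \bm{\varrho}_p \bm{\varrho}_p^\ast + \sigma^2 \sum_{p=q+1}^N \bm{\varrho}_p \bm{\varrho}_p^\ast$, which is again a spectral decomposition: the first $q$ eigenvalues match those of the dual operator, while the remaining $N-q$ equal $\sigma^2$. This establishes $\mathrm{eig}(\WWa + \sigma^2 \bm{I}_{\mathcal{H}_{\mathcal{E}}}) = \{s_p^2 + \sigma^2\}_{p=1}^q \cup \{\sigma^2\}^{N-q}$, and since every eigenvalue is positive the operator is positive definite, full rank and invertible.

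The shared-spectrum claim then follows immediately from comparing the two decompositions: the nonzero part of the spectrum is governed by the common singular values $s_p$, exactly as in the earlier lemma where $\PPam$ and $\PaPm$ were shown to share their nonzero eigenvalues. The only step requiring genuine care is the orthonormal extension argument for $\bm{I}_{\mathcal{H}_{\mathcal{E}}}$, which is what accounts for the $N-q$ extra eigenvalues equal to $\sigma^2$; everything else is a direct consequence of the orthonormality relations in the SVD and presents no real obstacle.
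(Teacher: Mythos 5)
Your proof is correct and follows essentially the same route as the paper: diagonalize both operators via the singular value decomposition $\bm{W} = \sum_{p=1}^q s_p\bm{\varrho}_p\bm{r}_p^\ast$ and read off the spectra, with the identity on $\mathcal{H}_{\mathcal{E}}$ split into the span of the singular vectors plus its orthogonal complement (the paper writes this complement as a projector $\mathbb{P}_{\mathcal{H}_{\mathcal{E}}}$ rather than extending to a full orthonormal basis, which is the same thing). In fact your write-up is slightly cleaner, since the paper's proof inadvertently swaps the roles of $\bm{\varrho}_p$ and $\bm{r}_p$ in its two displayed decompositions, which you have placed on the correct sides.
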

\begin{proof}
    
    We consider the eigenvalues of $\bm{W}^\ast\circ\bm{W} + \bm{I}_{\mathcal{L}} = \sum_{p=1}^q (s_p^2+\sigma^2)\bm{\varrho}_p\bm{\varrho}_p^\ast$. Its eigenvalues are thus given by $s_p^2 + \sigma^2$ for $p = 1,\ldots,q$ and we directly conclude that they are all strictly positive. In a similar fashion, the eigenvalues of $\bm{W}\circ\bm{W}^\ast + \sigma^2\bm{I}_{\mathcal{H}_{\mathcal{E}}} = \sum_{p=1}^q(s_p^2 + \sigma^2)\bm{r}_p\bm{r}_p^\ast + \sigma^2\PkW$ are given by $s_p^2+\sigma^2$ for $p=1,\ldots,q$ and $\sigma^2$ for $k = q+1,\ldots,N$. All the eigenvalues are strictly positive. Hence are the operators positive definite, full rank and invertible.
\end{proof}

\begin{proposition}[Marginal Feature Distribution]
\label{prop:posterior-phi}
    Assuming the conditional distribution of $\bm{\phi}$ given $\bm{h}$ (Def.~\ref{def:conditional-phi}), assumed normally distributed, \emph{i.e.}, $p(\bm{h}) = \left(2\pi\right)^{-q/2}\exp\left(-\frac12 \bm{h}^\ast\bm{h}\right)$, the posterior distribution of the primal feature vector $\bm{\phi}$ is given by
    \begin{equation}
        p(\bm{\phi}) = \frac{1}{Z_{\bm{\phi}}} \exp\left(-\frac12\left\lVert\bm{\phi}-\bm{\mu}\right\rVert^2_{\bm{\Sigma}_{\bm{\phi}|\bm{W}}^{-1}}\right),
    \end{equation}
with $Z_{\bm{\phi}} = (2\pi)^{N/2}\bigl((\sigma^2)^{N-q}\prod_{p=1}^q(s_p^2+\sigma^2)\bigr)^{1/2}$ and $\bm{\Sigma}_{\bm{\phi}|\bm{W}} = \bm{W}\circ\bm{W}^\ast + \sigma^2\bm{I}_{\mathcal{H}_{\mathcal{E}}}$
\end{proposition}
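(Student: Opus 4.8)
The plan is to compute the marginal directly from $p(\bm{\phi}) = \int_{\mathcal{L}} p(\bm{\phi}\mid\bm{h})\,p(\bm{h})\,\mathrm{d}\bm{h}$, multiplying the conditional density of Def.~\ref{def:conditional-phi} by the isotropic latent prior and integrating out $\bm{h}$ via a completion of the square. First I would set $\bm{y}=\bm{\phi}-\bm{\mu}$ and expand the combined exponent $-\frac{1}{2\sigma^2}\lVert\bm{y}-\bm{W}\bm{h}\rVert^2-\frac12\lVert\bm{h}\rVert^2$, collecting the parts that are quadratic and linear in $\bm{h}$. This yields $-\frac12\bm{h}^\ast\bm{M}\bm{h}+\sigma^{-2}\bm{y}^\ast\bm{W}\bm{h}$ governed by the operator $\bm{M}=\sigma^{-2}\,\WaW+\bm{I}_{\mathcal{L}}=\sigma^{-2}(\WaW+\sigma^2\bm{I}_{\mathcal{L}})$, together with the $\bm{h}$-independent residual $-\frac{1}{2\sigma^2}\lVert\bm{y}\rVert^2$.

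Next I would carry out the integration over $\mathcal{L}$. Using the singular value decomposition $\bm{W}=\sum_{p=1}^q s_p\bm{\varrho}_p\bm{r}_p^\ast$, one has $\WaW=\sum_{p=1}^q s_p^2\,\bm{r}_p\bm{r}_p^\ast$, so the orthonormal basis $\{\bm{r}_p\}_{p=1}^q$ diagonalizes $\bm{M}$ with eigenvalues $(s_p^2+\sigma^2)/\sigma^2$ and the linear term is coordinatewise in these directions. The integral thus factorizes along the $\bm{r}_p$, and each one-dimensional Gaussian factor is evaluated with Lemma~\ref{lemma:exp-1d}. This produces a prefactor $(2\pi)^{q/2}(\det\bm{M})^{-1/2}$ and the completed-square contribution $\frac12\bm{c}^\ast\bm{M}^{-1}\bm{c}$ with $\bm{c}=\sigma^{-2}\bm{W}^\ast\bm{y}$; combined with the residual, the exponent becomes $-\frac12\bm{y}^\ast\bigl(\sigma^{-2}\bm{I}_{\mathcal{H}_{\mathcal{E}}}-\sigma^{-4}\bm{W}\bm{M}^{-1}\bm{W}^\ast\bigr)\bm{y}$.

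The crux is then to identify this inverse covariance. Substituting $\bm{M}^{-1}=\sigma^2(\WaW+\sigma^2\bm{I}_{\mathcal{L}})^{-1}$, the operator in the exponent reduces to $\sigma^{-2}\bigl(\bm{I}_{\mathcal{H}_{\mathcal{E}}}-\bm{W}(\WaW+\sigma^2\bm{I}_{\mathcal{L}})^{-1}\bm{W}^\ast\bigr)$, and I would invoke the Woodbury (push-through) identity to recognize this exactly as $(\WWa+\sigma^2\bm{I}_{\mathcal{H}_{\mathcal{E}}})^{-1}=\bm{\Sigma}_{\bm{\phi}|\bm{W}}^{-1}$, giving the claimed quadratic form $-\frac12\lVert\bm{\phi}-\bm{\mu}\rVert^2_{\bm{\Sigma}_{\bm{\phi}|\bm{W}}^{-1}}$. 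I expect this to be the main obstacle, since it is an operator-valued matrix-inversion argument rather than a scalar manipulation, and it tacitly requires the invertibility of both $\WaW+\sigma^2\bm{I}_{\mathcal{L}}$ and $\WWa+\sigma^2\bm{I}_{\mathcal{H}_{\mathcal{E}}}$, which is precisely what Lemma~\ref{lemma:eigenvalues} guarantees.

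Finally I would assemble the normalization. The prefactor $(\sigma\sqrt{2\pi})^{-N}$ from the conditional and $(2\pi)^{-q/2}$ from the prior combine with $(2\pi)^{q/2}(\det\bm{M})^{-1/2}$ from the Gaussian integral; since $\det\bm{M}=(\sigma^2)^{-q}\prod_{p=1}^q(s_p^2+\sigma^2)$, the powers of $2\pi$ collapse to $(2\pi)^{-N/2}$ and the surviving factor is $\bigl((\sigma^2)^{N-q}\prod_{p=1}^q(s_p^2+\sigma^2)\bigr)^{-1/2}$, which is exactly $1/Z_{\bm{\phi}}$. As a consistency check, Lemma~\ref{lemma:eigenvalues} identifies the eigenvalues of $\WWa+\sigma^2\bm{I}_{\mathcal{H}_{\mathcal{E}}}$ as $\{s_p^2+\sigma^2\}_{p=1}^q\cup\{\sigma^2\}^{N-q}$, so $Z_{\bm{\phi}}=(2\pi)^{N/2}\det(\bm{\Sigma}_{\bm{\phi}|\bm{W}})^{1/2}$, the expected Gaussian normalizer, which completes the proof.
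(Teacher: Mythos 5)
Your proposal is correct and follows the same overall strategy as the paper's proof: marginalize over $\bm{h}$, use the SVD $\bm{W}=\sum_{p=1}^q s_p\bm{\varrho}_p\bm{r}_p^\ast$ so that the Gaussian integral over $\mathcal{L}$ factorizes coordinatewise along the $\bm{r}_p$ directions (Lemma~\ref{lemma:exp-1d} together with Fubini), then collect the normalization. Where you genuinely differ is the crux step you yourself flagged: you identify the inverse covariance by an operator-level Woodbury/push-through argument, $\sigma^{-2}\bigl(\bm{I}_{\mathcal{H}_{\mathcal{E}}}-\bm{W}\circ(\WaW+\sigma^2\bm{I}_{\mathcal{L}})^{-1}\circ\bm{W}^\ast\bigr)=(\WWa+\sigma^2\bm{I}_{\mathcal{H}_{\mathcal{E}}})^{-1}$, whereas the paper never forms this expression: it keeps the completed exponent in components, $\sum_{p=1}^q(s_p^2+\sigma^2)^{-1}\bigl(\bm{\varrho}_p^\ast(\bm{\phi}-\bm{\mu})\bigr)^2+\sigma^{-2}(\bm{\phi}-\bm{\mu})^\ast\PkW(\bm{\phi}-\bm{\mu})$, and recognizes this directly as the spectral decomposition of $(\WWa+\sigma^2\bm{I}_{\mathcal{H}_{\mathcal{E}}})^{-1}$ supplied by Lemma~\ref{lemma:eigenvalues}. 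The two identifications are equivalent and both legitimate in this finite-dimensional operator setting: your Woodbury route is basis-free and standard, but it imports an extra operator identity that must itself be justified (push-through plus the invertibility of both resolvents, which, as you correctly note, is exactly what Lemma~\ref{lemma:eigenvalues} guarantees); the paper's spectral route obtains the inverse for free from the eigenstructure it has already computed, at the price of staying tied to the SVD basis. Your normalization bookkeeping, $\det\bm{M}=(\sigma^2)^{-q}\prod_{p=1}^q(s_p^2+\sigma^2)$ collapsing the powers of $2\pi$ to give $Z_{\bm{\phi}}=(2\pi)^{N/2}\bigl((\sigma^2)^{N-q}\prod_{p=1}^q(s_p^2+\sigma^2)\bigr)^{1/2}$, agrees with the paper, and your closing consistency check against the spectrum of $\bm{\Sigma}_{\bm{\phi}|\bm{W}}$ is precisely the verification the paper performs via Lemma~\ref{lemma:eigenvalues}.
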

\begin{proof}
    The joint probability distribution is given by Bayes' theorem as $p(\bm{\phi},\bm{h}) = p(\bm{\phi}|\bm{h})p(\bm{h})$. The integration proceeds in a very similar fashion as the proof of Prop.~\ref{prop:conditional-phi}.
Similarly, the term inside the exponential becomes
\begin{equation}
        -\frac{1}{2\sigma^2}\sum_{p=1}^q\Biggl\{\left(\bm{r}_p^\ast \bm{h}\right)^2\left(s_p^2 + \sigma^2\right)
        -\left(\bm{r}_p^\ast\bm{h}\right)\Bigl(2s_p\bigl(\bm{\varrho}_p^\ast (\bm{\phi} - \bm{\mu})\bigl)\Bigr)\Biggr\} + D,
\end{equation}
with 
\begin{eqnarray}
    D &=& -\frac{1}{2\sigma^2}\left(\bm{\phi} - \bm{\mu}\right)^\ast\left(\bm{\phi} - \bm{\mu}\right), \\
    &=& -\frac{1}{2\sigma^2}\sum_{p=1}^q(\bm{\varrho}_p^\ast(\bm{\phi}-\bm{\mu}))^2 - \frac{1}{2\sigma^2}\left(\bm{\phi}- \bm{\mu}\right)^\ast\PkW\left(\bm{\phi} - \bm{\mu}\right).
\end{eqnarray} 

Here again, integrating on the whole latent space $\mathcal{L}$ means integrating for all $\left(\bm{r}_p^\ast\bm{h}\right) \in \mathbb{R}$. Again, we use Lemma~\ref{lemma:exp-1d} and Fubini's theorem. After some simplification inside the exponential, we can use the development made in Lemma~\ref{lemma:eigenvalues} to find the following:
\begin{equation}
    -\frac12\left\{\sum_{p=1}^q\frac{(\bm{\varrho}_p^\ast(\bm{\phi}-\bm{\mu}))^2}{s_p^2+\sigma^2} + \frac{1}{\sigma^2}\left(\bm{\phi}- \bm{\mu}\right)^\ast\PkW\left(\bm{\phi} - \bm{\mu}\right)\right\}
    = -\frac12 \left(\bm{\phi}- \bm{\mu}\right)^\ast\left(\WWa + \sigma^2\Id{\mathcal{H}_{\mathcal{E}}}\right)^{-1}\left(\bm{\phi} - \bm{\mu}\right).
\end{equation}
The normalization term follows similarly to Proposition~\ref{prop:conditional-phi} and we can verify the consistency of the obtained distribution by looking at the spectrum using Lemma~\ref{lemma:eigenvalues}.
\end{proof}

\subsubsection{Maximum Likelihood}
Training the model in primal corresponds to maximizing the likelihood of the observations in the finite feature space $\mathcal{H}_\mathcal{E}$.
\label{app:mle}
\begin{proposition}[Primal Maximum Likelihood]
\label{prop:mle-primal}
    Provided $\sigma^2 \leq \lambda_{q}/N$, where $\lambda_{q}$ is the $q$\textsuperscript{th} greatest 
    eigenvalue of $\PPam$, the \emph{Maximum Likelihood (ML)} of $\bm{W}$ and $\bm{\mu}$ given the observations $\{\bm{\phi}_i\}_{i=1}^N$ is given by
    \begin{eqnarray}
        \bm{\mu}_{\mathrm{ML}} &=& \bm{\phi}_c,\\
        \bm{W}_{\mathrm{ML}} &=& \sum_{p=1}^q \sqrt{\lambda_p/N-\sigma^2}\bm{v}_p\bm{r}_p^\ast,
    \end{eqnarray}
with $\left\{\left(\lambda_p,\bm{v}_p\right)\right\}_{p=1}^q$ the greatest eigenpairs of $\PPam$ (w.r.t. the eigenvalues).
\end{proposition}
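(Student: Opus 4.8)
The plan is to follow the classical argument of~\cite{ppca}, but phrased in the operator language of $\mathcal{H}_{\mathcal{E}}$, which is $N$-dimensional so that all traces and determinants below are well defined. Writing $\bm{\Sigma} := \bm{\Sigma}_{\bm{\phi}|\bm{W}} = \WWa + \sigma^2\bm{I}_{\mathcal{H}_{\mathcal{E}}}$, I would first form the log-likelihood of the observations under the marginal feature distribution of Proposition~\ref{prop:posterior-phi},
\begin{equation*}
    \mathcal{L}(\bm{W},\bm{\mu}) = -\frac{N}{2}\log\det\bigl(2\pi\,\bm{\Sigma}\bigr) - \frac12\sum_{i=1}^N \norm{\bm{\phi}_i-\bm{\mu}}^2_{\bm{\Sigma}^{-1}}.
\end{equation*}
Only the quadratic term depends on $\bm{\mu}$, and $\bm{\Sigma}^{-1}$ is positive definite by Lemma~\ref{lemma:eigenvalues}, so setting the gradient in $\bm{\mu}$ to zero gives $\bm{\Sigma}^{-1}\sum_i(\bm{\phi}_i-\bm{\mu})=\bm{0}$, hence $\bm{\mu}_{\mathrm{ML}} = \frac1N\sum_i\bm{\phi}_i = \bm{\phi}_c$, independently of $\bm{W}$.

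Substituting $\bm{\mu}_{\mathrm{ML}}$ back and introducing the sample covariance $\bm{S} = \frac1N\PPam$, whose eigenpairs are $\{(\lambda_p/N,\bm{v}_p)\}$, the summed quadratic form collapses to $\frac{N}{2}\tr\bigl(\bm{\Sigma}^{-1}\bm{S}\bigr)$, so that maximizing $\mathcal{L}$ is equivalent to minimizing $\log\det\bm{\Sigma} + \tr\bigl(\bm{\Sigma}^{-1}\bm{S}\bigr)$ over $\bm{W}$ alone. Here the objective depends on $\bm{W}$ only through $\WWa$, which is invariant under $\bm{W}\mapsto\bm{W}\bm{R}$ for any orthogonal $\bm{R}$ on $\mathcal{L}$; this is exactly the origin of the rotational freedom carried by the arbitrary orthonormal basis $\{\bm{r}_p\}$.

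The core of the argument is the stationarity condition in $\bm{W}$. Using $\partial_{\bm{W}}\log\det\bm{\Sigma} = 2\bm{\Sigma}^{-1}\bm{W}$ and $\partial_{\bm{W}}\tr(\bm{\Sigma}^{-1}\bm{S}) = -2\bm{\Sigma}^{-1}\bm{S}\bm{\Sigma}^{-1}\bm{W}$, the critical-point equation reduces to $\bm{S}\,\bm{\Sigma}^{-1}\bm{W} = \bm{W}$. Inserting the singular value decomposition $\bm{W}=\sum_{p=1}^q s_p\bm{\varrho}_p\bm{r}_p^\ast$ and the spectrum of $\bm{\Sigma}$ from Lemma~\ref{lemma:eigenvalues}, matching the orthonormal factors $\bm{r}_p^\ast$ forces, for every nonzero singular direction, $\bm{S}\bm{\varrho}_p = (s_p^2+\sigma^2)\bm{\varrho}_p$; that is, $\bm{\varrho}_p$ is an eigenvector $\bm{v}_p$ of $\bm{S}$ (equivalently of $\PPam$) with $s_p^2+\sigma^2 = \lambda_p/N$, so $s_p = \sqrt{\lambda_p/N-\sigma^2}$. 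This recovers the stated form of $\bm{W}_{\mathrm{ML}}$ up to the choice of which $q$ eigendirections to retain, and shows why $\sigma^2\le\lambda_q/N$ is needed for the singular values to be real.

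Finally I would select the retained directions. Evaluating the reduced objective at such a critical point, where $\bm{\Sigma}$ and $\bm{S}$ are simultaneously diagonalized by $\{\bm{v}_i\}$, gives $\sum_{p\in\mathcal{I}}\log(\lambda_p/N) + (N-q)\log\sigma^2 + q + \sigma^{-2}\sum_{j\notin\mathcal{I}}\lambda_j/N$, with $\mathcal{I}$ indexing the kept eigenvalues. Subtracting the constant $\sigma^{-2}\sum_p \lambda_p/N$, the minimization reduces to minimizing $\sum_{p\in\mathcal{I}}\bigl(\log(\lambda_p/N)-\lambda_p/(N\sigma^2)\bigr)$; since $t\mapsto \log t - t/\sigma^2$ is strictly decreasing for $t\ge\sigma^2$ and every admissible eigenvalue satisfies $\lambda_p/N\ge\sigma^2$, the minimum is attained by keeping the $q$ largest eigenvalues. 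The main obstacle is precisely this optimization over $\bm{W}$: establishing the stationarity equation cleanly in operator form, discarding the spurious critical points (the $\bm{W}=\bm{0}$ and $s_p=0$ branches, as well as mixed choices of eigenvectors), and confirming that only the top-$q$ solution is a global maximizer rather than a saddle — the substantive content inherited from~\cite{ppca}.
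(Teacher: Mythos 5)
Your proof is correct and lands on the same estimators, but it follows the original matrix-calculus route of \cite{ppca} rather than the paper's coordinate-wise argument, and the two differ precisely in how they handle the optimization over $\bm{W}$. You differentiate the reduced objective $\log\det\bm{\Sigma}+\tr\bigl(\bm{\Sigma}^{-1}\bm{S}\bigr)$ (with $\bm{S}=\frac1N\PPam$) with respect to the \emph{unconstrained} operator $\bm{W}$, obtain the stationarity equation $\bm{S}\circ\bm{\Sigma}^{-1}\circ\bm{W}=\bm{W}$, convert it via the SVD into the eigenvalue relation $\bm{S}\bm{\varrho}_p=(s_p^2+\sigma^2)\bm{\varrho}_p$, and then pick out the retained directions by comparing the objective across critical points through the monotonicity of $t\mapsto\log t-t/\sigma^2$ for $t\geq\sigma^2$. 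The paper never forms an operator-valued derivative: it expands the log-likelihood directly in the SVD coordinates $\bm{W}=\sum_{p=1}^q s_p\bm{\varrho}_p\bm{r}_p^\ast$, takes scalar partials only in the singular values (Eq.~\eqref{eq:partial-s}, the analogue of your $s_p^2+\sigma^2=\lambda_p/N$), and selects the frame $\{\bm{\varrho}_p\}_{p=1}^q$ variationally: substituting $\PkW=\bm{I}_{\mathcal{H}_{\mathcal{E}}}-\sum_{p}\bm{\varrho}_p\bm{\varrho}_p^\ast$ shows that the $\bm{\varrho}$-dependent part of the likelihood is a sum of Rayleigh quotients $\bm{\varrho}_p^\ast(\PPam)\bm{\varrho}_p$ with positive weights $\frac{1}{\sigma^2}-\frac{1}{s_p^2+\sigma^2}$, which an orthonormal set maximizes by spanning the dominant eigenspace. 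Your route buys an unconstrained stationarity analysis — you never face the orthonormality constraint on the $\bm{\varrho}_p$'s that the paper explicitly flags as the obstruction to naive differentiation — and your explicit critical-point comparison makes the global optimality of the top-$q$ choice more transparent than the paper's one-line deduction. The paper's route buys freedom from Fréchet derivatives of operator functions such as $\log\det$, needing only scalar calculus plus a standard variational eigenvector argument, which sits more comfortably in the Hilbert-space formalism. The loose ends also match: you defer the dismissal of rank-deficient branches ($s_p=0$) and the saddle-versus-maximum check to \cite{ppca}, and the paper is equally informal there; in fact your comparison argument already covers those branches, since dropping a direction with $\lambda_p/N\geq\sigma^2$ can only increase the reduced objective.
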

\begin{proof}
    The maximum likelihood of the observations $\{\bm{\phi}_i\}_{i=1}^N$ is computed as $\mathrm{argmax}_{\bm{W},\bm{\mu}} \log \bigl(\prod_{i=1}^N p(\bm{\phi}_i|\bm{W},\bm{\mu})\bigr) = \mathrm{argmax}_{\bm{W},\bm{\mu}} L_{\bm\phi}$, with $L_{\bm{\phi}}$ the likelihood function, which can be written as 
    \begin{equation}
    \begin{split}
        L_{\bm{\phi}} = -\frac{N}{2}\Biggl\{&N\log(2\pi) + (N-q)\log\left(\sigma^2\right) + \sum_{p=1}^q\log\left(s_p^2+\sigma^2\right) + \frac{1}{N\sigma^2}\sum_{i=1}^N(\bm{\phi}_i - \bm{\mu})^\ast\PkW(\bm{\phi}_i- \bm{\mu}) \\
        &+ \frac1N\sum_{p=1}^q \frac{1}{s_p^2+\sigma^2}\left(\sum_{i=1}^N \left(\bm{\varrho}_p^\ast(\bm{\phi}_i- \bm{\mu})\right)\right)^2\Bigg\}.
    \end{split}
    \end{equation}
    The optimization of the mean $\bm{\mu}$ is trivial and we have $\bm{\mu}_{\mathrm{ML}} = \bm{\phi}_c$. The optimization of $\bm{W} = \sum_{p=1}^q s_p \bm{\varrho}_p\bm{r}_p^\ast$ is less trivial. We first note that optimizing for $\{\bm{\varrho}_p\}_{p=1}^q$, $\{s_p^2\}_{p=1}^q$ and $\{\bm{r}_p\}_{p=1}^q$ is not identifiable: indeed, two singular values may be equal. Furthermore, the likelihood function $L_{\bm{\phi}}$ is independent of the basis $\{\bm{r}_p\}_{p=1}^q$ and for any solution of $\bm{\varrho}_p$, we must also admit its opposite $-\bm{\varrho}_p$ too. In addition to that, computing the saddle points of the likelihood is not straightforward as we cannot consider the vectors $\{\bm{\varrho}_p\}_{p=1}^q$ to be independent variables as they must remain orthonormal: the variation of one basis vector $\bm{\varrho}_p$ must keep it normalized and has an influence on the other basis vectors. For $s_p$ however, the variations may happen with the only constraint of strict positivity. We may thus consider
    \begin{equation}
         \frac{\partial L_{\bm{\phi}}}{\partial s_p} = 0 \Longleftrightarrow s_p^2+\sigma^2 = \frac1N\left(\bm{\varrho}_p^\ast\Bigl(\sum_{n=1}^N\bm{\phi}_n- \bm{\mu}\Bigr)\right)^2.\label{eq:partial-s}
    \end{equation}

    We now consider the fact that $\PkW = \Id{\mathcal{H}_{\mathcal{E}}} - \sum_{p=1}^q \bm{\varrho}_p\bm{\varrho}_p^\ast$. The likelihood function restricted to the terms dependent on $\bm{\varrho}_p$ reduces to
    \begin{equation}
        \frac12\left\{\sum_{p=1}^q\left(\frac{1}{s_p^2+\sigma^2} - \frac{1}{\sigma^2}\right)\bm{\Phi}_c^\ast \circ \left(\bm{\varrho}_p\bm{\varrho}_p^\ast\right) \circ \bm{\Phi}_c\right\}.
    \end{equation}

    From there, we can deduce that the likelihood is maximized when $\{\bm{\varrho}_p\}_{p=1}^q$ forms a basis of $\mathrm{span}(\bm{v}_1,\ldots,\bm{v}_p)$, with $\{v_p\}_{p=1}^q$ the $q$ greatest eigenvectors of $\PPam$ (w.r.t. the eigenvalues $\lambda_p$'s). We may thus identify both basis: $\bm{\varrho}_p = \bm{v}_p$ and by consequence $s_p^2 + \sigma^2 = \lambda_p/N$ (from Eq.~\eqref{eq:partial-s}), for all $p=1,\ldots,q$. 
    
    What about the other choices of basis? At the end, it would not change anything as it would be identified to the same solution. Indeed, as the $\bm{\varrho}_p$'s could always be written as a linear combination of $\{\bm{v}_p\}_{p=1}^q$, we could always write the interconnection operator as $\bm{W}_{\mathrm{ML}} = \sum_{p=1}^q (\lambda_p/N - \sigma^2) \bm{v}_p\bm{r}_p^\ast$ as the choice of the orthonormal basis $\{\bm{r}_p\}_{p=1}^q$ is arbitrary.
\end{proof}

\subsubsection{Dimensionality Reduction in Feature Space}
\begin{proposition}[Primal Conditional Latent Distribution]
\label{prop:conditional-latent-primal}
    The conditional distribution of $\bm{h}$ given $\bm{\phi}$ is given by
    \begin{equation}
        p\left(\bm{h}\left|\bm{\phi}\right.\right) = \frac{1}{Z_{\bm{h}|\bm{\phi}}}
        \exp\left(-\frac{1}{2\sigma^2}\left\lVert\bm{h} -\bm{m}\right\rVert^2_{\bm{\Sigma}^{-1}_{\bm{h}|\bm{\phi}}}\right),
    \end{equation}
with $\bm{\Sigma}_{\bm{h}|\bm{\phi}} = \left(\WaW + \sigma^2\Id{\mathcal{L}}\right)^{-1}$, $\bm{m} = (\bm{\Sigma}_{\bm{h}|\bm{\phi}})^{-1}\bm{W}^\ast(\bm{\phi}-\bm{\mu})$ and $Z_{\bm{h}|\bm{\phi}} = (\sigma\sqrt{2\pi})^{q}\bigl(\prod_{p=1}^q(s_p^2+\sigma^2)\bigr)^{-1/2}$.
\end{proposition}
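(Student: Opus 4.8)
The plan is to obtain the posterior by Bayes' theorem, $p\left(\bm{h}\left|\bm{\phi}\right.\right) = p(\bm{\phi}|\bm{h})\,p(\bm{h})/p(\bm{\phi})$, feeding in the conditional density of Def.~\ref{def:conditional-phi}, the isotropic prior $p(\bm{h}) = (2\pi)^{-q/2}\exp(-\tfrac12\bm{h}^\ast\bm{h})$, and the marginal $p(\bm{\phi})$ from Prop.~\ref{prop:posterior-phi}. Since the numerator is a product of two Gaussian densities, its logarithm is a quadratic form in $\bm{h}$, so the whole statement reduces to completing the square in $\bm{h}$ and then pinning down the normalization constant $Z_{\bm{h}|\bm{\phi}}$.

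First I would expand the combined exponent, which up to the factor $-1/(2\sigma^2)$ equals $\lVert\bm{\phi}-\bm{W}\bm{h}-\bm{\mu}\rVert^2_{\bm{I}_{\mathcal{H}_{\mathcal{E}}}} + \sigma^2\bm{h}^\ast\bm{h}$, and collect terms by their degree in $\bm{h}$. The quadratic part is $\bm{h}^\ast\left(\WaW + \sigma^2\Id{L}\right)\bm{h} = \bm{h}^\ast\bm{\Sigma}^{-1}_{\bm{h}|\bm{\phi}}\bm{h}$, which reads off the advertised operator $\bm{\Sigma}_{\bm{h}|\bm{\phi}}$; the linear part is $-2(\bm{\phi}-\bm{\mu})^\ast\bm{W}\bm{h}$. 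Completing the square then forces the mean to satisfy $\bm{\Sigma}^{-1}_{\bm{h}|\bm{\phi}}\bm{m} = \bm{W}^\ast(\bm{\phi}-\bm{\mu})$, i.e.\ the $\bm{m}$ of the statement, while the $\bm{h}$-independent remainder is absorbed into the normalization.

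Second, for $Z_{\bm{h}|\bm{\phi}}$ I would integrate the completed-square Gaussian over $\mathcal{L}$. Working in the orthonormal basis $\{\bm{r}_p\}_{p=1}^q$ that diagonalizes $\bm{W}^\ast\circ\bm{W} = \sum_{p=1}^q s_p^2\,\bm{r}_p\bm{r}_p^\ast$, the integral factorizes into $q$ one-dimensional Gaussian integrals, each evaluated by Lemma~\ref{lemma:exp-1d}, exactly as in the proofs of Prop.~\ref{prop:conditional-phi} and Prop.~\ref{prop:posterior-phi}. By Lemma~\ref{lemma:eigenvalues} the operator $\bm{\Sigma}^{-1}_{\bm{h}|\bm{\phi}}$ has eigenvalues $s_p^2+\sigma^2$, so multiplying the per-coordinate constants gives $Z_{\bm{h}|\bm{\phi}} = (\sigma\sqrt{2\pi})^{q}\bigl(\prod_{p=1}^q(s_p^2+\sigma^2)\bigr)^{-1/2}$.

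The main obstacle I anticipate is the bookkeeping of the $\sigma^2$ factors: the precision carries an overall $1/\sigma^2$, so the effective covariance of the posterior is $\sigma^2\bm{\Sigma}_{\bm{h}|\bm{\phi}}$ rather than $\bm{\Sigma}_{\bm{h}|\bm{\phi}}$ itself, and one must ensure this factor shows up consistently both in the exponent and in the determinant entering $Z_{\bm{h}|\bm{\phi}}$. A convenient consistency check is that the three normalization constants must satisfy $Z_{\bm{h}|\bm{\phi}}\,Z_{\bm{\phi}} = (\sigma\sqrt{2\pi})^{N}(2\pi)^{q/2}$; substituting the $Z_{\bm{\phi}}$ of Prop.~\ref{prop:posterior-phi} recovers precisely the claimed $Z_{\bm{h}|\bm{\phi}}$, which both closes the proof and guards against a misplaced power of $\sigma$.
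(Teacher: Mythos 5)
Your proposal is correct and takes essentially the same approach as the paper: the paper's own proof is the single remark that ``the methodology is analogous to Props.~\ref{prop:conditional-phi} and~\ref{prop:posterior-phi},'' which unpacks exactly into your steps --- Bayes' theorem, completing the square in $\bm{h}$ to read off $\bm{\Sigma}_{\bm{h}|\bm{\phi}}$ and $\bm{m}$, and evaluating $Z_{\bm{h}|\bm{\phi}}$ by factorizing over the basis $\{\bm{r}_p\}_{p=1}^q$ with Lemma~\ref{lemma:exp-1d} and the eigenvalues $s_p^2+\sigma^2$ from Lemma~\ref{lemma:eigenvalues}. The only caveat is notational and lies in the paper rather than in your argument: your completed square yields $\bm{m}=\left(\bm{W}^\ast\circ\bm{W}+\sigma^2\bm{I}_{\mathcal{L}}\right)^{-1}\circ\bm{W}^\ast(\bm{\phi}-\bm{\mu})$, which is the correct mean (consistent with Eq.~\eqref{eq:primal-latent-map}), whereas the statement's literal formula $\bm{m}=(\bm{\Sigma}_{\bm{h}|\bm{\phi}})^{-1}\bm{W}^\ast(\bm{\phi}-\bm{\mu})$ carries a redundant inverse given the definition $\bm{\Sigma}_{\bm{h}|\bm{\phi}}=\left(\bm{W}^\ast\circ\bm{W}+\sigma^2\bm{I}_{\mathcal{L}}\right)^{-1}$.
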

\begin{proof}
    The methodology is analogous to Props.~\ref{prop:conditional-phi} and \ref{prop:posterior-phi}.
\end{proof}
\subsection{Dual Model}

\subsubsection{Representation}
\label{app:representation}
\begin{proposition}[Dual Representation]
    Given any interconnection operator $\bm{W} : \mathcal{L} \rightarrow \mathcal{H}_{\mathcal{L}}$, we have the following representation:
    \begin{equation}
        \bm{W} = \bm{\Phi}_c \circ \bm{A},
    \end{equation}
    with $\bm{A} : \mathcal{L} \rightarrow \mathcal{L}$, the dual interconnection operator.
\end{proposition}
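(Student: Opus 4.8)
The plan is to exhibit $\bm{A}$ explicitly by inverting $\bm{\Phi}_c$ on its range. First I would recall from Lemma~1 (the covariance--kernel correspondence) that the normalized eigenvectors satisfy $\bm{\Phi}_c\bm{\epsilon}_i = \lambda_i^{1/2}\bm{v}_i$ for every index $i$ with $\lambda_i \neq 0$, and that the corresponding $\{\bm{v}_i\}$ form an orthonormal basis of $\mathcal{H}_{\mathcal{E}}$. This says precisely that, restricted to $\mathrm{span}\{\bm{\epsilon}_i : \lambda_i \neq 0\}$, the operator $\bm{\Phi}_c$ is a bijection onto $\mathcal{H}_{\mathcal{E}}$. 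Consequently the operator $\bm{\Phi}_c^{+} = \sum_{i:\lambda_i\neq 0}\lambda_i^{-1/2}\bm{\epsilon}_i\bm{v}_i^\ast : \mathcal{H}_{\mathcal{E}} \to \mathcal{E}$ is a well-defined right inverse, since $\bm{\Phi}_c\circ\bm{\Phi}_c^{+} = \sum_i \lambda_i^{-1/2}(\bm{\Phi}_c\bm{\epsilon}_i)\bm{v}_i^\ast = \sum_i \bm{v}_i\bm{v}_i^\ast = \bm{I}_{\mathcal{H}_{\mathcal{E}}}$.

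The candidate is then $\bm{A} = \bm{\Phi}_c^{+}\circ\bm{W}$. Because by hypothesis the range of $\bm{W}$ lies in $\mathcal{H}_{\mathcal{L}}\subseteq\mathcal{H}_{\mathcal{E}}$, composing on the left with $\bm{\Phi}_c$ leaves $\bm{W}$ unchanged: $\bm{\Phi}_c\circ\bm{A} = \bm{\Phi}_c\circ\bm{\Phi}_c^{+}\circ\bm{W} = \bm{I}_{\mathcal{H}_{\mathcal{E}}}\circ\bm{W} = \bm{W}$, which is exactly the claimed factorization. Equivalently, using the singular value decomposition $\bm{W}=\sum_{p=1}^q s_p\bm{\varrho}_p\bm{r}_p^\ast$ and expanding each left singular vector as $\bm{\varrho}_p = \sum_i (\bm{v}_i^\ast\bm{\varrho}_p)\bm{v}_i$ in the $\{\bm{v}_i\}$ basis, one reads off $\bm{A} = \sum_{p=1}^q s_p\bigl(\sum_i \lambda_i^{-1/2}(\bm{v}_i^\ast\bm{\varrho}_p)\bm{\epsilon}_i\bigr)\bm{r}_p^\ast$, making the construction fully concrete and confirming $\bm{\Phi}_c\circ\bm{A}=\bm{W}$ termwise through $\bm{\Phi}_c\bm{\epsilon}_i = \lambda_i^{1/2}\bm{v}_i$.

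Finally I would address the stated domain and codomain. The domain of $\bm{A}$ is $\mathcal{L}$ by construction, and since $\bm{\Phi}_c^{+}$ is injective on $\mathcal{H}_{\mathcal{E}}$ while $\bm{W}$ has rank $q = \mathrm{dim}(\mathcal{L})$, the operator $\bm{A}$ has rank $q$; its image is therefore a $q$-dimensional subspace of $\mathcal{E}$, which we identify with $\mathcal{L}$. This is consistent with the rotational invariance already noted for $\bm{A}_{\mathrm{ML}}$, whose image is $\mathrm{span}\{\bm{\epsilon}_1,\ldots,\bm{\epsilon}_q\}$ and whose latent basis $\{\bm{r}_p\}$ may be chosen freely.

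I expect the only genuinely delicate point to be the justification that $\bm{\Phi}_c^{+}$ is a true right inverse on all of $\mathcal{H}_{\mathcal{E}}$ --- equivalently, that $\bm{\Phi}_c$ is surjective onto $\mathcal{H}_{\mathcal{E}}$. This is where the full-rank assumption on $\PPam$ (made in the excerpt, discarding the degenerate cases $d<N$ and $N>d$) together with the eigenvector correspondence of Lemma~1 are essential: they guarantee that the nonzero part of the spectrum supplies enough $\bm{\epsilon}_i$ to reconstruct every basis vector $\bm{v}_i$ of $\mathcal{H}_{\mathcal{E}}$, so that no component of the range of $\bm{W}$ is lost under $\bm{\Phi}_c^{+}$.
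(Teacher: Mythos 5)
Your proof is correct, but it takes a more constructive route than the paper's. The paper's own proof is a terse existence argument: it notes that $\mathrm{rank}(\PPam) \geq q$, so that $\bm{\Phi}_c$ has at least $q$ non-zero singular values and its range contains $\mathcal{H}_{\mathcal{L}} \subset \mathcal{H}_{\mathcal{E}}$, and then concludes by linearity of $\bm{W}$ and the dimension match $\mathrm{dim}(\mathcal{L}) = \mathrm{dim}(\mathcal{H}_{\mathcal{L}})$ --- no explicit $\bm{A}$ is ever produced. You instead build the right inverse $\bm{\Phi}_c^{+} = \sum_{i}\lambda_i^{-1/2}\bm{\epsilon}_i\bm{v}_i^\ast$ from the eigenvector relations of Lemma~1 and set $\bm{A} = \bm{\Phi}_c^{+}\circ\bm{W}$, which makes the ``choose a preimage linearly'' step of the paper's argument rigorous and yields an explicit formula. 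Notably, your mechanism (substituting $\bm{\Phi}_c\bm{\epsilon}_i = \lambda_i^{1/2}\bm{v}_i$ into an expansion of $\bm{W}$) is exactly what the paper itself deploys one proposition later to derive $\bm{A}_{\mathrm{ML}}$ from $\bm{W}_{\mathrm{ML}}$; your proof effectively unifies the general representation result with that computation. What each approach buys: the paper's is shorter and needs only rank counting, while yours gives a concrete $\bm{A}$ and correctly isolates the genuine hypothesis (surjectivity of $\bm{\Phi}_c$ onto $\mathcal{H}_{\mathcal{E}}$, i.e.\ the exclusion of the degenerate rank-deficient cases), which both proofs ultimately rely on. Your identification of the $q$-dimensional image of $\bm{A}$ with $\mathcal{L}$ is loose, but no looser than the paper's own typing of $\bm{A}_{\mathrm{ML}}$ (whose image is $\mathrm{span}\{\bm{\epsilon}_1,\ldots,\bm{\epsilon}_q\}$, not the span of the $\bm{r}_p$'s) as an operator $\mathcal{L} \rightarrow \mathcal{L}$.
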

\begin{proof}
    Following our definitions, we know that $\mathrm{\PPam} \geq q$. By consequence, the linear operator $\bm{\Phi}_c : \mathcal{H} \rightarrow \mathcal{E}$ has at least $q$ non-zero singular values. As we have $\mathrm{dim}(\mathcal{L}) = \mathrm{dim}(\mathcal{H}_{\mathcal{L}})$, the proof is concluded by recalling that the primal interconnection operator $\bm{W} : \mathcal{L} \rightarrow \mathcal{H}_{\mathcal{L}}$ is also linear.
\end{proof}

\begin{proposition}[Dual Maximum Likelihood]
\label{prop:dual-mle}
    Given the operator $\bm{W}_{\mathrm{ML}}$ (Prop.~\ref{prop:mle-primal}) and provided $\sigma^2 \leq \lambda_{q}/N$, where $\lambda_{q}$ is the $q$\textsuperscript{th} greatest eigenvalue of $\PaPm$, the dual interconnection operator $\bm{A}_{\mathrm{ML}}: \mathcal{L} \rightarrow \mathcal{L}$ is given by:
    \begin{equation}
        \bm{A}_{\mathrm{ML}} = \sum_{p=1}^q \sqrt{1/N-\sigma^2\lambda_p^{-1}}\bm{\epsilon}_p\bm{r}_p^\ast,
    \end{equation}
    with $\left\{\left(\lambda_p,\bm{\epsilon}_p\right)\right\}_{p=1}^q$ the greatest eigenpairs of $\PaPm$ (w.r.t. the eigenvalues) and $\left\{\bm{r}_p\right\}_{p=1}^q$ an arbitrary orthonormal basis of the latent space $\mathcal{L}$.
\end{proposition}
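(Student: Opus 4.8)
The plan is to avoid any fresh optimization and instead transport the already-established primal estimator $\bm{W}_{\mathrm{ML}}$ (Prop.~\ref{prop:mle-primal}) into the dual spaces through the factorization $\bm{W}=\Pm\circ\bm{A}$ guaranteed by the Dual Representation proposition. Concretely, I would seek the operator $\bm{A}_{\mathrm{ML}}:\mathcal{L}\rightarrow\mathcal{L}$ satisfying $\bm{W}_{\mathrm{ML}}=\Pm\circ\bm{A}_{\mathrm{ML}}$, and the natural bridge is the eigenvector correspondence established in the covariance--kernel duality lemma, namely $\bm{v}_p=\lambda_p^{-1/2}\Pm\bm{\epsilon}_p$. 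This identity lets me trade each primal eigenvector $\bm{v}_p$ appearing in $\bm{W}_{\mathrm{ML}}$ for the dual eigenvector $\bm{\epsilon}_p$ pushed through $\Pm$, after which $\Pm$ factors out on the left and the bracketed remainder is exactly the sought $\bm{A}_{\mathrm{ML}}$.

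The main computation is a direct substitution. Starting from the primal estimator and inserting the eigenvector relation,
\begin{equation*}
\bm{W}_{\mathrm{ML}}=\sum_{p=1}^q\sqrt{\lambda_p/N-\sigma^2}\,\bm{v}_p\bm{r}_p^\ast=\sum_{p=1}^q\sqrt{\lambda_p/N-\sigma^2}\,\lambda_p^{-1/2}\,\Pm\bm{\epsilon}_p\bm{r}_p^\ast=\Pm\circ\Bigl(\sum_{p=1}^q\sqrt{\lambda_p/N-\sigma^2}\,\lambda_p^{-1/2}\,\bm{\epsilon}_p\bm{r}_p^\ast\Bigr),
\end{equation*}
so that reading off the factor on the right and simplifying the scalar coefficient via $\sqrt{\lambda_p/N-\sigma^2}\,\lambda_p^{-1/2}=\sqrt{1/N-\sigma^2\lambda_p^{-1}}$ yields precisely the claimed form of $\bm{A}_{\mathrm{ML}}$. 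As a sanity check one can run the argument in reverse, applying $\Pm$ to the candidate and using $\Pm\bm{\epsilon}_p=\lambda_p^{1/2}\bm{v}_p$ to recover $\bm{W}_{\mathrm{ML}}$ term by term.

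I do not expect a serious obstacle here; the delicate points are bookkeeping rather than substance. First, I would confirm the domain and codomain: since $\{\bm{r}_p\}_{p=1}^q$ is an orthonormal basis of $\mathcal{L}$ and the dominant eigenvectors $\bm{\epsilon}_1,\ldots,\bm{\epsilon}_q$ of $\PaPm$ span the same $q$-dimensional latent subspace, the operator $\sum_p c_p\bm{\epsilon}_p\bm{r}_p^\ast$ indeed maps $\mathcal{L}$ into $\mathcal{L}$. Second, I would note that the radicands $1/N-\sigma^2\lambda_p^{-1}$ are nonnegative for every $p\le q$ exactly because $\lambda_p\geq\lambda_q$ and, by hypothesis, $\sigma^2\leq\lambda_q/N$; this is what makes the square roots real and ties the stated condition on $\sigma^2$ directly to well-definedness. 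The rotational invariance in $\{\bm{r}_p\}$ is inherited verbatim from the primal case, so no further work is needed there.
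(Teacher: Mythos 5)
Your proposal is correct and follows essentially the same route as the paper's own proof: substitute the eigenvector relation $\bm{v}_p = \lambda_p^{-1/2}\Pm\bm{\epsilon}_p$ into $\bm{W}_{\mathrm{ML}}$, factor out $\Pm$ on the left, simplify the scalar to $\sqrt{1/N-\sigma^2\lambda_p^{-1}}$, and invoke $\sigma^2\leq\lambda_q/N$ for nonnegativity of the radicands. Your explicit check that $\lambda_p\geq\lambda_q$ makes the square roots real, and the domain/codomain bookkeeping, are slightly more detailed than the paper's one-line closing remark, but the argument is the same.
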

\begin{proof}
    Using the relation $\bm{v}_p = (\lambda_p)^{-1/2}\bm{\Phi}_c \bm{\epsilon}_p$, we have
    \begin{eqnarray}
        \bm{W}_{\mathrm{ML}} &=& \sum_{p=1}^q \sqrt{\lambda_p/N-\sigma^2}\bm{v}_p\bm{r}_p^\ast, \\
        &=& \sum_{p=1}^q (\lambda_p)^{-1/2}\sqrt{\lambda_p/N-\sigma^2}\bm{\Phi}_c \bm{\epsilon}_p\bm{r}_p^\ast, \\
        &=& \bm{\Phi}_c \circ \left(\sum_{p=1}^q \sqrt{1/N-\sigma^2\lambda_p^{-1}}\bm{\epsilon}_p\bm{r}_p^\ast\right).
    \end{eqnarray}
    The proof is concluded by insuring that $1/N-\sigma^2\lambda_p^{-1}$ is never negative.
\end{proof}

\subsubsection{Dimensionality Reduction in Kernel Space}
\label{app:dual-red}
\begin{proposition}[Conditional Kernel Distribution]
    Under the same assumption as Prop.~\ref{prop:dual-mle} and provided that $\PaPm$ is of full rank, the posterior distribution of the dual feature variable $\bm{k}_c$ given the latent variable $h$ is given by
    \begin{equation}
         p\left(\bm{k}_c|\bm{h}\right) = \frac{1}{Z_{\bm{k}_c|\bm{h}}}\exp\left(-\frac{1}{2\sigma^2}
            \left\lVert \bm{k}_c - \left( \PaPm\right)\circ \bm{A}_{\mathrm{ML}}\bm{h}\right\rVert^2_{\bm{\Sigma}^{-1}_{\bm{k}_c|\bm{h}}}
        \right),
    \end{equation}
    with $Z_{\bm{k}_c|\bm{h}} = \left(\sigma\sqrt{2\pi}\right)^N\prod_{i=1}^N\lambda_i$ and $\bm{\Sigma}_{\bm{k}_c|\bm{h}} = \PaPm$.
\end{proposition}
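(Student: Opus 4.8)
The plan is to realize $p(\bm{k}_c\,|\,\bm{h})$ as the push-forward of the trained primal conditional feature distribution (Def.~\ref{def:conditional-phi} specialized to $\bm{\mu}_{\mathrm{ML}} = \bm{\phi}_c$ and $\bm{W}_{\mathrm{ML}} = \bm{\Phi}_c \circ \bm{A}_{\mathrm{ML}}$) under the affine change of variables $\bm{\phi} \mapsto \bm{k}_c = \Pma(\bm{\phi}-\bm{\phi}_c)$. The full-rank hypothesis on $\PaPm$ is precisely what guarantees that $\bm{\Phi}_c$ restricts to a bijection $\mathcal{E}\to\mathcal{H}_{\mathcal{E}}$, so that $\Pma$ is a bijection $\mathcal{H}_{\mathcal{E}}\to\mathcal{E}$ between two $N$-dimensional spaces and the transformation is invertible. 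Since an invertible affine image of a Gaussian is again Gaussian, it suffices to identify the mean, the covariance and the normalization.

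For the mean and covariance I would argue directly from the transformation rule with $T=\Pma$. The mean of $\bm{\phi}\,|\,\bm{h}$ is $\bm{W}_{\mathrm{ML}}\bm{h}+\bm{\phi}_c$, so that of $\bm{k}_c$ is $\Pma(\bm{W}_{\mathrm{ML}}\bm{h})=\Pma\circ\bm{\Phi}_c\circ\bm{A}_{\mathrm{ML}}\bm{h}=(\PaPm)\circ\bm{A}_{\mathrm{ML}}\bm{h}$, and the covariance is $T(\sigma^2\bm{I}_{\mathcal{H}_{\mathcal{E}}})T^\ast=\sigma^2\,\Pma\circ\bm{\Phi}_c=\sigma^2(\PaPm)$, where I use $\bm{I}_{\mathcal{H}_{\mathcal{E}}}\circ\bm{\Phi}_c=\bm{\Phi}_c$ because $\bm{\Phi}_c$ maps into $\mathcal{H}_{\mathcal{E}}$. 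This already yields $\bm{\Sigma}_{\bm{k}_c|\bm{h}}=\PaPm$ together with the claimed mean.

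To make the quadratic form and the normalization explicit, I would instead substitute into the exponent of $p(\bm{\phi}|\bm{h})$, exactly in the spirit of Props.~\ref{prop:conditional-phi} and~\ref{prop:posterior-phi}. Using invertibility I write $\bm{\phi}-\bm{\phi}_c=\bm{\Phi}_c\circ(\PaPm)^{-1}\bm{k}_c$, so that $\bm{\phi}-\bm{W}_{\mathrm{ML}}\bm{h}-\bm{\mu}_{\mathrm{ML}}=\bm{\Phi}_c\bigl((\PaPm)^{-1}\bm{k}_c-\bm{A}_{\mathrm{ML}}\bm{h}\bigr)$. Taking the $\bm{I}_{\mathcal{H}_{\mathcal{E}}}$-norm collapses the leading $\bm{\Phi}_c$ into a factor $\PaPm$ via $\lVert\bm{\Phi}_c\bm{z}\rVert^2_{\bm{I}_{\mathcal{H}_{\mathcal{E}}}}=\lVert\bm{z}\rVert^2_{\PaPm}$, and pulling $(\PaPm)^{-1}$ out on both sides converts the primal squared norm into $\lVert \bm{k}_c-(\PaPm)\circ\bm{A}_{\mathrm{ML}}\bm{h}\rVert^2_{(\PaPm)^{-1}}$, which is the stated Mahalanobis form with $\bm{\Sigma}^{-1}_{\bm{k}_c|\bm{h}}=(\PaPm)^{-1}$.

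The step I expect to be most delicate is the normalization $Z_{\bm{k}_c|\bm{h}}$, since this is where the eigenvalue product enters and where the bookkeeping is easy to get wrong. The Jacobian of the change of variables is $\det\bigl(\Pma|_{\mathcal{H}_{\mathcal{E}}}\bigr)$, and by the Lemma relating the spectra of $\PPam$ and $\PaPm$ the restricted map $\Pma$ has singular values $\sqrt{\lambda_i}$; hence the Jacobian contributes $\det(\PaPm)=\prod_{i=1}^N\lambda_i$ to the primal constant $(\sigma\sqrt{2\pi})^N$. The only non-routine care is the exact power of this determinant carried through from $p(\bm{\phi}|\bm{h})$ to $p(\bm{k}_c|\bm{h})$; everything else reduces, as in the earlier propositions, to the one-dimensional Gaussian integral of Lemma~\ref{lemma:exp-1d} combined with Fubini's theorem.
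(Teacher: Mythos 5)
Your route is the paper's own: the paper's (two-sentence) proof likewise substitutes $\bm{W}_{\mathrm{ML}} = \bm{\Phi}_c\circ\bm{A}_{\mathrm{ML}}$ and $\bm{\mu}_{\mathrm{ML}} = \bm{\phi}_c$ into Def.~\ref{def:conditional-phi} and then changes variables through $\bm{\phi}-\bm{\phi}_c = \bigl(\bm{\Phi}_c^\ast\bigr)^{-1}\bm{k}_c$, which is legitimate precisely because the full-rank hypothesis makes $\bm{\Phi}_c^\ast$ a bijection between $\mathcal{H}_{\mathcal{E}}$ and $\mathcal{E}$. Your identification of the mean $(\PaPm)\circ\bm{A}_{\mathrm{ML}}\bm{h}$, of the covariance $\sigma^2\,\PaPm$, and the collapse of the primal norm into $\lVert\bm{k}_c - (\PaPm)\circ\bm{A}_{\mathrm{ML}}\bm{h}\rVert^2_{(\PaPm)^{-1}}$ are all correct, and they are exactly the unwritten details behind the paper's argument.

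The step where your bookkeeping fails is the one you yourself flagged. If the restriction of $\Pma$ to $\mathcal{H}_{\mathcal{E}}$ has singular values $\sqrt{\lambda_i}$ --- which is correct, by the eigenvector relation $\bm{\epsilon}_i = \lambda_i^{-1/2}\bm{\Phi}_c^\ast\bm{v}_i$ of the first Lemma of Appendix~\ref{app:theory} --- then its Jacobian determinant is $\prod_{i=1}^N\sqrt{\lambda_i} = \det(\PaPm)^{1/2}$, not $\det(\PaPm) = \prod_{i=1}^N\lambda_i$ as you wrote: your two sentences contradict each other. Carried through consistently, the push-forward yields $Z_{\bm{k}_c|\bm{h}} = \bigl(\sigma\sqrt{2\pi}\bigr)^N\bigl(\prod_{i=1}^N\lambda_i\bigr)^{1/2}$, i.e.\ the generic Gaussian constant $(2\pi)^{N/2}\det\bigl(\sigma^2\,\PaPm\bigr)^{1/2}$. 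Note that this does \emph{not} reproduce the constant stated in the proposition, but it \emph{is} the value consistent with the paper's convention elsewhere: in Prop.~\ref{prop:dual-posterior} the constant $Z_{\bm{k}_c}$ does carry the square root of the determinant of the covariance. In other words, your (correct) singular-value observation, followed to its conclusion, shows that the stated $Z_{\bm{k}_c|\bm{h}}$ is off by a square root on the eigenvalue product; asserting that the Jacobian equals $\det(\PaPm)$ merely hides this discrepancy rather than resolving it.
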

\begin{proof}
    We consider the optimal primal interconnection operator $\bm{W}_{\mathrm{ML}} = \bm{\Phi}_c \circ \bm{A}_{\mathrm{ML}}$ together with the optimal mean $\bm{\mu}_{\mathrm{ML}} = \bm{\phi}_c$ and fill it in Def.~\ref{def:conditional-phi}. The covariance is deducted by considering $\bm{\phi}-\bm{\phi}_c = \bigl(\bm{\Phi}^\ast_c\bigr)^{-1}\circ \bm{\Phi}^\ast_c(\bm{\phi}-\bm{\phi}_c) = \bigl(\bm{\Phi}^\ast_c\bigr)^{-1}\bm{k}_c$.
\end{proof}

\begin{proposition}[Dual Conditional Latent Distribution]
\label{prop:conditional-latent-dual}
Under the same assumption as Prop.~\ref{prop:dual-mle}, the posterior distribution of the latent variable $\bm{h}$ given the dual feature variable $\bm{k}_c$ is given by
     \begin{equation}
     \begin{split}
         p(\bm{h}|\bm{k}_c) = \frac{1}{Z_{\bm{h}|\bm{k}_c}}\exp\left(
         -\frac{1}{2\sigma^2}\left\lVert\bm{h} - \bm{m}'\right\rVert^2_{\bm{\Sigma}^{-1}_{\bm{h}|\bm{k}_c}}
         \right),
     \end{split}
     \end{equation}
 with $\bm{\Sigma}_{\bm{h}|\bm{k}_c} = \left(\bm{A}_{\mathrm{ML}}^\ast \circ \bigl(\PaPm\bigr) \circ \bm{A}_{\mathrm{ML}} + \sigma^2 \bm{I}_{\mathcal{L}}\right)^{-1}$, $Z_{\bm{h}|\bm{k}_c} = Z_{\bm{h}|\bm{\phi}}$ (Prop.~\ref{prop:conditional-latent-primal}) and $\bm{m}' = \bigl(\bm{\Sigma}_{\bm{h}|\bm{k}_c}^{-1}\circ \bm{A}_{\mathrm{ML}}^\ast \bigr)\bm{k}_c$.
 \end{proposition}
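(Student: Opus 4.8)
The plan is to obtain this dual conditional latent distribution directly from its primal counterpart, \cref{prop:conditional-latent-primal}, by specializing to the trained model and rewriting every operator through the dual representation $\bm{W}_{\mathrm{ML}} = \bm{\Phi}_c \circ \bm{A}_{\mathrm{ML}}$ (Dual Representation proposition and \cref{prop:dual-mle}). Because the underlying family of Gaussians is unchanged and only its parametrization is converted from primal to dual quantities, no new integration is needed: the argument reduces to matching the covariance operator, the mean, and the normalization constant term by term.

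First I would recall $p(\bm{h}|\bm{\phi})$ from \cref{prop:conditional-latent-primal}, taken at the trained parameters so that $\bm{\mu} = \bm{\mu}_{\mathrm{ML}} = \bm{\phi}_c$ and the relevant shift is $\bm{\phi}-\bm{\phi}_c$. Taking adjoints in the representation gives $\bm{W}_{\mathrm{ML}}^\ast = \bm{A}_{\mathrm{ML}}^\ast \circ \bm{\Phi}_c^\ast$, whence $\bm{W}_{\mathrm{ML}}^\ast \circ \bm{W}_{\mathrm{ML}} = \bm{A}_{\mathrm{ML}}^\ast \circ (\PaPm) \circ \bm{A}_{\mathrm{ML}}$. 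Substituting this identity into $\bm{\Sigma}_{\bm{h}|\bm{\phi}} = (\bm{W}_{\mathrm{ML}}^\ast \circ \bm{W}_{\mathrm{ML}} + \sigma^2\bm{I}_{\mathcal{L}})^{-1}$ yields exactly $\bm{\Sigma}_{\bm{h}|\bm{k}_c}$, so the two covariance operators coincide.

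For the mean I would use the definition of the dual feature variable, $\bm{k}_c = \bm{\Phi}_c^\ast(\bm{\phi}-\bm{\phi}_c)$, to compute $\bm{W}_{\mathrm{ML}}^\ast(\bm{\phi}-\bm{\phi}_c) = \bm{A}_{\mathrm{ML}}^\ast \bm{\Phi}_c^\ast(\bm{\phi}-\bm{\phi}_c) = \bm{A}_{\mathrm{ML}}^\ast \bm{k}_c$. Hence $\bm{m} = \bm{\Sigma}_{\bm{h}|\bm{\phi}}^{-1}\bm{W}_{\mathrm{ML}}^\ast(\bm{\phi}-\bm{\phi}_c)$ becomes $\bm{\Sigma}_{\bm{h}|\bm{k}_c}^{-1}\bm{A}_{\mathrm{ML}}^\ast\bm{k}_c = \bm{m}'$. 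It is worth stressing that this is a purely forward substitution and requires no invertibility of $\bm{\Phi}_c^\ast$, which is precisely why the statement can be made under the hypotheses of \cref{prop:dual-mle} alone, without the full-rank assumption used for the conditional kernel distribution.

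Finally, the normalization constant is inherited unchanged: since $\bm{\Sigma}_{\bm{h}|\bm{k}_c}$ and $\bm{\Sigma}_{\bm{h}|\bm{\phi}}$ are the \emph{same} operator on $\mathcal{L}$, the Gaussian integrates to one with the same constant, so $Z_{\bm{h}|\bm{k}_c} = Z_{\bm{h}|\bm{\phi}}$. The only point that needs a word of justification is that the singular values entering $Z_{\bm{h}|\bm{\phi}}$ (through $s_p^2+\sigma^2 = \lambda_p/N$) are the same whether read off the primal operator $\PPam$ or the dual operator $\PaPm$; this is immediate from the lemma asserting that $\PPam$ and $\PaPm$ share their non-zero eigenvalues. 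I therefore expect no genuine obstacle here — the only mild subtlety is bookkeeping of adjoints and of the domains $\mathcal{H}_{\mathcal{E}}$, $\mathcal{E}$ and $\mathcal{L}$ as operators are transported between the primal and dual pictures.
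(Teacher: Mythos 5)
Your proposal is correct and takes essentially the same route as the paper: the paper's proof is precisely to substitute $\bm{W}_{\mathrm{ML}} = \bm{\Phi}_c \circ \bm{A}_{\mathrm{ML}}$ and $\bm{\mu}_{\mathrm{ML}} = \bm{\phi}_c$ into Prop.~\ref{prop:conditional-latent-primal}, which is what you do. You simply make explicit the bookkeeping the paper leaves implicit — the identity $\bm{W}_{\mathrm{ML}}^\ast \circ \bm{W}_{\mathrm{ML}} = \bm{A}_{\mathrm{ML}}^\ast \circ (\PaPm) \circ \bm{A}_{\mathrm{ML}}$, the rewriting of the mean via $\bm{k}_c = \bm{\Phi}_c^\ast(\bm{\phi}-\bm{\phi}_c)$, and the unchanged normalization constant.
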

 \begin{proof}
     It suffices to consider the optimal primal interconnection operator $\bm{W}_{\mathrm{ML}} = \bm{\Phi}_c \circ \bm{A}_{\mathrm{ML}}$, as well as the optimal mean $\bm{\mu}_{\mathrm{ML}} = \bm{\phi}_c$ and fill both in Prop.~\ref{prop:conditional-latent-primal}.
 \end{proof}

\subsubsection{Kernel Sampling}
\label{app:dual-distr}

\begin{proposition}[Marginal Kernel Distribution]
\label{prop:dual-posterior}
    Provided that $\PaPm$ is of full rank 
    and given the assumptions of Prop.~\ref{prop:dual-mle}, the posterior distribution of the the dual feature $\bm{k}_c$ after optimization is given by
    \begin{equation}
        p(\bm{k}_c) = \frac{1}{Z_{\bm{k}_c}}\exp\left(-\frac12\left\lVert\bm{k}_c\right\rVert^2_{\bm{\Sigma}_{\bm{k}_c}^{-1}}\right),
    \end{equation}
    with $\bm{\Sigma}_{\bm{k}_c} = \bm{A}_{\mathrm{ML}}^\ast \circ \bm{A}_{\mathrm{ML}} +\sigma^2\left(\Pma\circ\Pm\right)^{-1}$ and $Z_{\bm{k}_c} = (2\pi)^{N/2}N^{-q/2}\prod_{p=1}^q \lambda_p\prod_{p=q+1}^N(\sigma^2\lambda_p)^{1/2}$.
\end{proposition}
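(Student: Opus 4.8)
The plan is to obtain this distribution as the image of the marginal feature distribution (Proposition~\ref{prop:posterior-phi}) under the linear change of variables $\bm{\phi} - \bm{\phi}_c \mapsto \bm{k}_c$, mirroring exactly the route already used for the conditional kernel distributions. I would first specialize Proposition~\ref{prop:posterior-phi} to the trained model by inserting $\bm{\mu}_{\mathrm{ML}} = \bm{\phi}_c$ and the dual representation $\bm{W}_{\mathrm{ML}} = \bm{\Phi}_c \circ \bm{A}_{\mathrm{ML}}$ (Proposition~\ref{prop:dual-mle}). This turns $p(\bm{\phi})$ into a centered Gaussian on $\mathcal{H}_{\mathcal{E}}$ with covariance $\bm{\Sigma}_{\bm{\phi}|\bm{W}} = \bm{\Phi}_c \circ \bm{A}_{\mathrm{ML}} \circ \bm{A}_{\mathrm{ML}}^\ast \circ \bm{\Phi}_c^\ast + \sigma^2 \bm{I}_{\mathcal{H}_{\mathcal{E}}}$.

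Next I would invoke the full-rank hypothesis on $\PaPm$: by the eigenvalue sharing of $\PaPm$ and $\PPam$ together with the eigenvector relation $\bm{v}_p = (\lambda_p)^{-1/2}\bm{\Phi}_c\bm{\epsilon}_p$ established earlier, $\bm{\Phi}_c^\ast$ restricts to a bijection $\mathcal{H}_{\mathcal{E}} \to \mathcal{E}$, so that $\bm{\phi} - \bm{\phi}_c = (\bm{\Phi}_c^\ast)^{-1}\bm{k}_c$ is a legitimate invertible substitution. Plugging this into the exponent, the quadratic form $\norm{\bm{\phi}-\bm{\phi}_c}^2_{\bm{\Sigma}_{\bm{\phi}|\bm{W}}^{-1}}$ becomes $\bm{k}_c^\ast (\bm{\Phi}_c)^{-1} \bm{\Sigma}_{\bm{\phi}|\bm{W}}^{-1}(\bm{\Phi}_c^\ast)^{-1}\bm{k}_c = \norm{\bm{k}_c}^2_{\bm{\Sigma}_{\bm{k}_c}^{-1}}$, which identifies the dual covariance as $\bm{\Sigma}_{\bm{k}_c} = \bm{\Phi}_c^\ast \circ \bm{\Sigma}_{\bm{\phi}|\bm{W}} \circ \bm{\Phi}_c$.

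I would then reduce $\bm{\Sigma}_{\bm{k}_c}$ to the stated operator by evaluating it on the orthonormal eigenbasis $\{\bm{\epsilon}_p\}_{p=1}^N$ of $\PaPm$, using $\bm{\Phi}_c\bm{\epsilon}_p = (\lambda_p)^{1/2}\bm{v}_p$ and the explicit spectra of $\bm{A}_{\mathrm{ML}}$ and $\bm{W}_{\mathrm{ML}}$; this collapses the whole computation to scalar identities in the $\lambda_p$, to be matched against $\bm{A}_{\mathrm{ML}}^\ast \circ \bm{A}_{\mathrm{ML}} + \sigma^2(\Pma\circ\Pm)^{-1}$ on the dominant block $p \leq q$ and on the noise block $p > q$ separately. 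Finally, since $p(\bm{k}_c)$ must be a genuine density on $\mathcal{E}$, the normalization absorbs the Jacobian of the substitution, $\lvert\det(\bm{\Phi}_c^\ast)^{-1}\rvert = \prod_{p=1}^N (\lambda_p)^{-1/2}$ (the singular values of $\bm{\Phi}_c$ being $(\lambda_p)^{1/2}$), so that $Z_{\bm{k}_c} = Z_{\bm{\phi}}\prod_{p=1}^N(\lambda_p)^{1/2}$; substituting $Z_{\bm{\phi}}$ from Proposition~\ref{prop:posterior-phi} together with $s_p^2+\sigma^2 = \lambda_p/N$ recovers the stated constant.

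The main obstacle I anticipate is bookkeeping rather than any conceptual difficulty: one must track the determinant factor arising from a change of variables between two distinct Hilbert spaces $\mathcal{H}_{\mathcal{E}}$ and $\mathcal{E}$, and one must remember that $\bm{\Phi}_c^\ast$ is invertible only after restriction to $\mathcal{H}_{\mathcal{E}}$ (it annihilates $\mathcal{H}_{\mathcal{E}}^\perp$), so the primal and dual eigenbases have to be matched consistently through $\bm{v}_p = (\lambda_p)^{-1/2}\bm{\Phi}_c\bm{\epsilon}_p$ at every step. The delicate algebraic point is verifying that conjugating $\bm{\Sigma}_{\bm{\phi}|\bm{W}}$ by $\bm{\Phi}_c^\ast$ indeed collapses to the compact form claimed in the statement, and that the resulting normalization is mutually consistent with that covariance.
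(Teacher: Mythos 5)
Your proposal follows essentially the same route as the paper's proof: both start from Proposition~\ref{prop:posterior-phi}, insert $\bm{\mu}_{\mathrm{ML}}=\bm{\phi}_c$ and $\bm{W}_{\mathrm{ML}}=\bm{\Phi}_c\circ\bm{A}_{\mathrm{ML}}$, use the full-rank hypothesis to treat $\bm{\Phi}_c^\ast$ as invertible from $\mathcal{H}_{\mathcal{E}}$ to $\mathcal{E}$, and rewrite the primal quadratic form as $\bm{k}_c^\ast\bigl(\bm{\Phi}_c^\ast\circ\bm{\Sigma}_{\bm{\phi}|\bm{W}}\circ\bm{\Phi}_c\bigr)^{-1}\bm{k}_c$ evaluated on the eigenbasis of $\PaPm$; you are in fact more careful than the paper, whose proof never addresses the Jacobian or the normalization constant at all, whereas your $Z_{\bm{k}_c}=Z_{\bm{\phi}}\prod_{p=1}^N\lambda_p^{1/2}$ reproduces exactly the stated constant.

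One warning about your final ``matching'' step: spectrally, $\bm{\Phi}_c^\ast\circ\bm{\Sigma}_{\bm{\phi}|\bm{W}}\circ\bm{\Phi}_c = \frac1N\sum_{p\leq q}\lambda_p^2\,\bm{\epsilon}_p\bm{\epsilon}_p^\ast + \sigma^2\sum_{p>q}\lambda_p\,\bm{\epsilon}_p\bm{\epsilon}_p^\ast$, which is $(\PaPm)\circ\bigl(\bm{A}_{\mathrm{ML}}\circ\bm{A}_{\mathrm{ML}}^\ast+\sigma^2(\PaPm)^{-1}\bigr)\circ(\PaPm)$ and \emph{not} the operator $\bm{\Sigma}_{\bm{k}_c}$ as literally displayed in the statement; the two differ by conjugation with $\PaPm$, so the blockwise identities you plan to verify will not close. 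This is an inconsistency internal to the paper rather than a flaw in your argument: the stated $Z_{\bm{k}_c}$ is precisely the normalization belonging to the conjugated covariance (as your Jacobian computation confirms), and the paper's own proof likewise terminates at the conjugated form without reconciling it with the displayed $\bm{\Sigma}_{\bm{k}_c}$. Your derivation therefore lands on the same density the paper actually derives; just expect the discrepancy rather than a confirmation when you compare against the proposition as written.
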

\begin{proof} We start from Prop.~\ref{prop:posterior-phi} and develop the covariance: 
    \begin{eqnarray}
        \bm{W}_{\mathrm{ML}}\bm{W}_{\mathrm{ML}}^\ast + \sigma^2\bm{I}_{\mathcal{H}_{\mathcal{E}}} &=&\Pm \circ \left(\bm{A}^\ast \circ \bm{A}+\sigma^2\left(\Pma\circ\Pm\right)^{-1}\right)\circ \Pma,\\
        &=&\Pm\circ\left(\frac1N\sum_{p=1}^q\bm{\epsilon}_p\bm{\epsilon}_p^\ast+\sigma^2\!\!\sum_{p=q+1}^N\!\lambda_p^{-1}\bm{\epsilon}_p\bm{\epsilon}_p^\ast\right)\circ\Pma.
    \end{eqnarray}
    We set $\bm{\mu}_{\mathrm{ML}} = \bm{\phi}_c$ and observe that 
    \begin{eqnarray}
        \left(\bm{\phi} - \bm{\phi}_c\right)^\ast\left(\bm{W}_{\mathrm{ML}}\bm{W}_{\mathrm{ML}}^\ast+\sigma^2\bm{I}_{\mathcal{H}_{\mathcal{E}}}\right)^{-1}\left(\bm{\phi} - \bm{\phi}_c\right),
        &=&\bm{k}_c^\ast\left(\bm{\Phi}_c^\ast\circ\left(\bm{W}_{\mathrm{ML}}\bm{W}_{\mathrm{ML}}^\ast +  \sigma^2\bm{I}_{\mathcal{H}_{\mathcal{E}}}\right)\circ\bm{\Phi}_c\right)^{-1}\bm{k}_c, \\
        &=&\bm{k}_c^\ast\left(\frac1N\sum_{p=1}^q \lambda_p^2\bm{\epsilon}_p\bm{\epsilon}_p^\ast+\sigma^2\!\!\sum_{p=q+1}^N\!\lambda_p\bm{\epsilon}_p\bm{\epsilon}_p^\ast\right)\bm{k}_c.
    \end{eqnarray}
\end{proof}



\end{document}